\documentclass[letterpaper]{article}
\usepackage[preprint]{aaai2027}
\usepackage[hyphens]{url}
\usepackage{graphicx}
\usepackage{amsmath}
\usepackage{amssymb}
\usepackage{amsfonts}
\usepackage{amsthm}
\usepackage{mathtools}
\usepackage{booktabs}
\usepackage{array}
\usepackage{multirow}
\usepackage{pifont}
\usepackage{algorithm}
\usepackage{algpseudocode}
\usepackage{natbib}
\usepackage{caption}
\usepackage{xfp}
\usepackage{placeins}
\newcommand{\projecturl}{%
\leavevmode
\pdfstartlink attr{/Border [0 0 0]} user{/Subtype /Link /A << /S /URI /URI (https://github.com/songdc98/sketchops) >>}%
\textcolor{blue}{\url{https://github.com/songdc98/sketchops}}%
\pdfendlink}
\graphicspath{{./}{figures/}}

\theoremstyle{plain} \newtheorem{theorem}{Theorem}  \newtheorem{proposition}[theorem]{Proposition} \newtheorem{observation}[theorem]{Observation}   \theoremstyle{remark} \newtheorem{remark}[theorem]{Remark}

\newcolumntype{L}[1]{>{\raggedright\arraybackslash}p{#1}}
\newcolumntype{C}[1]{>{\centering\arraybackslash}p{#1}}
\title{Mergeable Model-Side Aggregation States for Long-Context Language Models}
\author{
Dachuan Song\textsuperscript{\rm 1},
Junyu Yin\textsuperscript{\rm 2},
Zechen Hu\textsuperscript{\rm 1},
Xuan Wang\textsuperscript{\rm 1}
}
\affiliations{
\textsuperscript{\rm 1}Department of Electrical and Computer Engineering, George Mason University, Fairfax, VA, USA\\
\textsuperscript{\rm 2}Department of Computer Science, George Mason University, Fairfax, VA, USA
}

\begin{document}

\maketitle

\begin{abstract}
A known limitation of long-context language models is their increasingly unreliable performance in non-additive, set-based aggregation as context length grows. Examples include cardinality estimation, set relationships, and grouped statistics, which widely exist in logs, program outputs, tables, and multi-turn conversations. 
To provide the aggregation state required by these tasks, we introduce a model-side aggregation interface that maintains compact Hash-based HyperLogLog (HLL) sketch states alongside a frozen language model. While the model processes the context, an extractor maps each relevant record to a canonical identity. The identity is then hashed and updates the HLL state.
These states can be merged across context segments and/or read out directly for downstream reasoning, avoiding an additional generate--execute--return cycle.
We validate the proposed approach by setting the HLL state size as 2 KiB (2,048 registers), which does not increase with context length or set cardinality. In a distinct-count experiment involving one million records, the mean relative error was 1.6\%. In a separate merge test, states built from as many as 256 segments produced exactly the same readout as a single pass over the same stream. On 3,969 aggregate-then-reason tasks from 174 source windows, the fixed-budget interface reached 99.2\% accuracy on Gemma 4 (31B, BF16), compared with 100.0\% under exact aggregation; the paired gap was 0.8 percentage points (95\% window-cluster CI: 0.5--1.3 points). On a matched set of 174 items, our method improved over direct full-context reasoning by 63.2 points on Qwen and 56.3 points on Gemma. The corresponding gains over chain-of-thought (CoT) reasoning were 60.9 and 63.2 points, respectively. On a fixed 1,200-task Oolong-Synth subset, our method reached 91.1\% on Qwen and 99.3\% on Gemma. Code is available at \projecturl.

\end{abstract}

\section{Introduction}
\label{sec:intro}

Long-context language models are increasingly used to analyze logs, program outputs,  tables, and multi-turn conversations~\citep{bertsch2025oolong,tooloutputs2026,logcopilot2026,cao2026coding}. 
Although these models can effectively identify relevant records, they tend to become increasingly unreliable at non-additive, set-based aggregation as context length grows~\citep{hsieh2024ruler,bertsch2025oolong}. Examples of such aggregation include cardinality estimation, set relationships, and grouped statistics, all of which may provide essential evidence for downstream reasoning and decision-making.
This limitation arises because the attention and pooling operations used in language models rely extensively on averaging and normalization. While these mechanisms are well suited for summarizing semantic content, they are not necessarily effective for numerical state aggregation, which requires explicitly tracking item uniqueness, duplication, or set union and overlaps. Although learned readouts may approximate these behaviors within short sequence lengths, they may fail when the context length and/or set cardinality extend beyond the training distribution.

Existing approaches to address this problem involve two representative lines of work. The first one relies on external execution, such as generating code or invoking tools to perform exact aggregation~\citep{chen2022program,tooloutputs2026,cao2026coding,logcopilot2026}. Although these methods can achieve exact aggregates, they require a separate generate--execute--return cycle, where pausing and restarting the model’s normal reasoning flow comes with additional resource consumption and time delay. 
The second one learns neural representations of sets, such as Deep Sets, Set Transformer, and Universal Mini-Batch Consistency (UMBC)~\citep{deepsets2017,settransformer2019,umbc2023,wagstaff2019}, that can be used to enhance long context aggregation. 
However, these general-purpose representations do not necessarily guarantee operator-specific behaviors, such as reliably recognizing repeated identities, nor do they provide predictable accuracy under a fixed resource budget. As will be demonstrated in our experiments, their aggregation error can increase as context length grows.
\textit{Contributions}: Motivated by these gaps, this paper aims to enhance the performance of long-context language models on non-additive, set-based aggregation tasks.
Accordingly, we introduce a model-side aggregation interface for long-context reasoning that maintains compact aggregation states alongside a frozen language model through HyperLogLog (HLL). The interface supports distinct counting, set combination, overlap estimation, and grouped aggregation with controlled error and bounded resource usage. Different from external execution methods, it does not require an additional generate--execute--return cycle. Different from general-purpose learned set representations, it provides explicit and predictable update, merge, and readout operations. More importantly, our method requires only lightweight state updates. For each input stream or active group, it maintains one fixed-budget sketch state, whose memory does not grow with context length or set cardinality.
Conceptually, this mechanism provides a form of approximate number sense to language models: it produces fast and sufficiently reliable estimates of quantities and set relationships and makes them directly available for downstream reasoning.

Our technical contributions include:
\begin{enumerate}
    \item We integrate lightweight, mergeable sketch states that are updated in parallel with the forward computation of a frozen language model, enabling efficient model-side aggregation without invoking an external executor.
    \item We develop a unified mergeable interface for set-based aggregation. It maintains one fixed-budget mergeable HLL state per input stream, from which distinct count, union, Jaccard similarity, and containment are read out. 

    \item We validate the proposed framework through both task-specific aggregation experiments and end-to-end reasoning evaluations, covering downstream reasoning performance, length extrapolation, mergeability, resource--error tradeoffs, and robustness to item-selection noise.
\end{enumerate}

It is worth noting that the HLL sketch algorithm used in the paper is not new. Our contribution lies in organizing aggregation states 
into a model-side interface for long-context reasoning, allowing them to be updated throughout context processing and their readouts to be made directly available for subsequent reasoning, and the validation of this design.

\section{Related Work}
\label{sec:related}
\subsection{Code Execution and Tool-Augmented Aggregation}

A common way to obtain exact aggregates is to introduce an external executor that computes these quantities for the language model. Program-Aided Language Models (PAL), Program of Thoughts (PoT), database-query systems, and coding agents could deploy aggregation computation to external Python runtimes or Structured Query Language (SQL) engines \citep{gao2023pal,chen2022program,cao2026coding}. 
LogCopilot applies the same idea to log analysis by translating natural-language requests into executable log queries \citep{logcopilot2026}. If the code is correctly generated and receives the relevant records, the executor can return an exact result to the language model as tool output \citep{tooloutputs2026}. However, the result becomes available only after query generation, execution, and tool return.
Although some executors can retain state across queries, an exact set still requires memory that grows with the number of distinct identities.
In contrast, our approach relies on a fixed-budget model-side state that is incorporated into the frozen model during context processing. Thus, the readouts are always immediately available for subsequent reasoning without invoking an external executor.

\subsection{Long-Context Aggregation and Learning-Based Set Representations}
Set representation and processing methods offer an alternative route for long-context set aggregation. Deep Sets~\citep{deepsets2017} maps items into permutation-invariant set functions, which can summarize variable-size sets without requiring the model to retain every item or depend on their input order.
Set Transformer~\citep{settransformer2019} uses attention to model interactions among set elements, allowing the aggregate to capture relationships that simple pooling may miss. It also reduces the cost of processing large sets.
Universal Mini-Batch Consistency (UMBC)~\citep{umbc2023} enables a large set to be processed in smaller partitions while guaranteeing that the results are invariant to different ways of set partitioning. This allows robust processing of extra-long contexts in a sequential manner. 
While these methods make set aggregation more scalable, the specific aggregation behaviors are still learned from data. They do not explicitly guarantee deterministic merging of independently computed states, nor offer a predictable error under a fixed resource budget. Our method, on the other hand, provides these properties through the design of an explicit aggregation state and update rules.
\subsection{Learned Data Structures and Streaming Sketches}
Beyond general set encoders, existing works also learn compact state representations for approximate set queries. Learned Bloom filters~\citep{rae2019meta,song2026elastic,vaidya2020partitioned} and neural sketches~\citep{cao2023meta} adapt their internal representations to the corresponding data distribution. Among them, MaxSketch is the closest method to our setting because it also maintains a compact, mergeable state for approximate cardinality estimation. However, MaxSketch is based on random Gaussian projections, which makes it suitable for noisy, high-dimensional observations~\citep{tsikouras2026maxsketch}. Our setting instead assumes that discrete item identities are directly available and focuses on integrating explicit streaming aggregation states with language-model reasoning.

From a technical perspective, our model-side interface leverages existing sketch operator HyperLogLog (HLL) to facilitate the update, merge, and value readout. Specifically, HLL estimates distinct cardinality with a fixed register array and merges two states by taking the maximum in each register \citep{hll2007}. In addition, paired HLL states can estimate Jaccard similarity and containment through joint maximum-likelihood estimation~\citep{ertl2017new}.
This allows a single compact state representation to support distinct counting, set combination, and overlap-related queries. There are alternative sketch operators that can provide similar capabilities. For example, $k$-Minimum Values (KMV) retains the $k$ smallest hash values and estimates both cardinality and overlap, whereas MinHash directly estimates Jaccard similarity~\citep{baryossef2002,minhash1997}. However, compared with HLL, their readouts are more specialized. For queries involving multiplicity, Count-Min Sketch can provide a complementary state for frequency estimation~\citep{countmin2004}.
We note that these sketch operators are established in the literature, our contribution lies in the interface that selects and maintains task-appropriate aggregation states alongside a frozen language model, and returns the resulting statistics to the model for downstream reasoning.

\section{Problem Setup}
\label{sec:prelim}
\label{sec:theory}
Consider a log-analysis request such as ``How many distinct users reported an error in each service?'' Identifying all error records is not sufficient to answer this question, because the same user may report multiple errors in the same service and should still be counted only once. Similarly, if the request instead asks how much the affected-user populations of two services overlap, knowing the number of users in each service is also insufficient: the computation must retain which users occur in both. Thus, after relevant records are identified, their identities may still need to be aggregated.

\smallskip
\noindent\textbf{Identity-Aware Aggregation.} 
Many non-additive aggregation tasks depend on how item identities occur across relevant records, including their uniqueness, repetition, grouping, and relationships across collections. We refer to this broad and fundamental family of tasks as \emph{identity-aware aggregation} and focus the scope of this paper on this setting. Specifically, we consider support-based tasks, which depend on which distinct identities occur and include distinct counting, union and intersection cardinalities, overlap, Jaccard similarity, and containment, as well as grouped requests that apply these operations separately within request-defined groups. These tasks capture a wide range of practical aggregation needs and are closely related to standard set operations in relational processing and established measures of set comparison~\citep{codd1970relational,minhash1997,setsketch2101}.

\smallskip
\noindent\textbf{Problem Formulation}.
To formalize our problem, let $x_{1:L}=(x_1,\ldots,x_L)$ denote a context of length $L$, and let $q$ denote a request over this context. 
We assume that an extractor associated with a frozen language model identifies the relevant records and organizes them into $\tau$ operand-specific streams (referred to as streams hereafter for simplicity):
\begin{align}
E(x_{1:L},q)
&=
(\mathcal R^{(1)},\ldots,\mathcal R^{(\tau)}), \nonumber\\
\text{where} ~\mathcal R^{(i)}
&=\bigl((z_t^{(i)},g_t^{(i)})\bigr)_{t=1}^{n_i}.
\label{eq:extractor}
\end{align}
Here, $\tau$ is the number of streams required by $q$ and $n_i$ is the number of records in stream $\mathcal R^{(i)}$. For each record, $z_t^{(i)}\in\mathcal U$ denotes its canonical identity in the item universe $\mathcal U$, and $g_t^{(i)}\in\mathcal G\cup\{\varnothing\}$ denotes an optional (i.e., $\varnothing$ if not used) group key.

For each stream $\mathcal R^{(i)}$, we maintain a fixed-budget \emph{sketch state} $s_i\in\mathcal S$, where 
$\mathcal S$ denotes the state space. Let
$y_q=f_q\!\left(
\mathcal R^{(1)},\ldots,\mathcal R^{(\tau)}
\right)$ denote the exact answer computed from the complete record streams.
A query-specific readout $\rho_q$ uses only the sketch states to compute an estimate:
\begin{equation}\label{eq_defrho}
\widehat y_q
=
\rho_q(s_1,\ldots,s_\tau)
\approx
y_q.
\end{equation}
The problem is therefore to construct and update the sketch states $s_i$ and to design the readout function $\rho_q$ that satisfies the following requirements:
\begin{itemize}
    \item \textbf{Model-side operation}: Each sketch state $s_i$ is updated as the model extracts records into the corresponding stream $\mathcal R^{(i)}$, with low computational overhead.
    
    \item \textbf{Fixed budget}: The sizes of the sketch states do not grow with the stream length or the number of distinct identities.
    
    \item \textbf{Aggregation consistency}: Sketch states constructed from different segments of the same stream must be mergeable into a state for the complete stream. 
\end{itemize}
Under these constraints, the objective is to allow $\widehat y_q$ to accurately approximate the exact answer $y_q$.

\smallskip
\begin{remark}
Normalized attention combines record representations through weighted averaging, which generally does not match the operations required by identity-aware aggregation. If an identity $z_t^{(i)}$ appears twice, the second record enters the average and usually changes the readout, whereas a support-based aggregation should remain unchanged. 
The mismatch also persists if a stream is processed in segments, where the same items in different segments are usually counted multiple times. Related issues have been identified in attention-based graph aggregation~\citep{zhang2020improving}. Appendix~A gives more detailed discussion and counterexamples.
\end{remark}

\section{Method}
\label{sec:method}

\begin{figure*}[t]
    \centering
    \includegraphics[width=\textwidth]{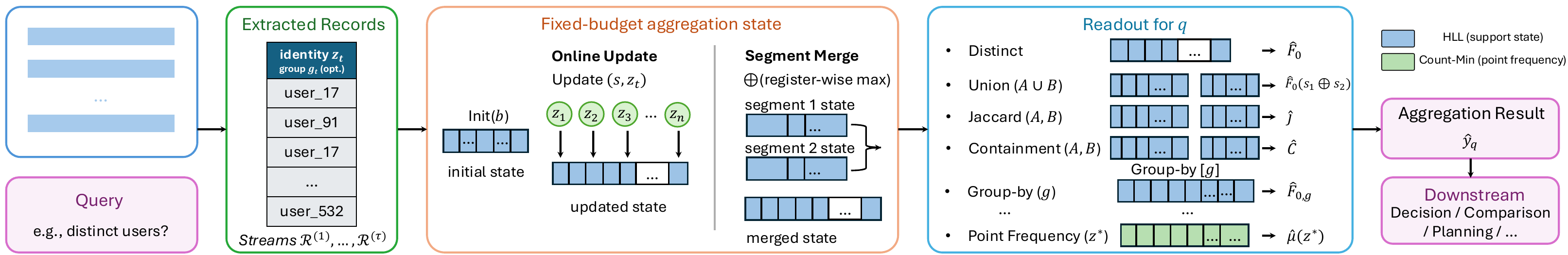}
\caption{
Model-side aggregation workflow. As the frozen language model processes the context, the extractor $E$ assigns canonical identities to relevant records and places them in the streams required by request $q$. For grouped requests, a group key routes each record to the corresponding group state. Each stream maintains a fixed-budget state, and states built from separate segments can be merged without reading the records again. The readout $\rho_q$ then obtains the requested aggregate from one or more states and returns $\widehat y_q$ to the model for downstream reasoning.
HLL is used for the set-based readouts, while Count--Min supports point-frequency queries.
}
    \label{fig:method}
\end{figure*}

Our method augments a frozen language model with explicit aggregation states implemented using sketch operators. Each state is assigned a user-defined budget that controls the tradeoff between estimation accuracy and storage overhead.
As the language model processes the context, the extractor $E$ identifies the relevant records, assigns each record a canonical identity $z_t^{(i)}$, and routes it to the corresponding stream $\mathcal R^{(i)}$. The associated sketch states run on model-side, i.e., they are updated online in parallel with the model’s forward computation and may be merged across compatible segments of the same stream.

Once all relevant records have been processed, a query-specific readout converts one or more states into an estimate. For example, distinct count estimates the number of identities from a single HyperLogLog (HLL) sketch state. Union size first merges two HLL states register by register and then reads the cardinality of the merged state. Jaccard similarity and containment keep the two sketch states separate; Joint maximum-likelihood estimation (JMLE) jointly estimates the distinct numbers of identities appearing only in either stream and in both streams, from which the two ratios are computed. Group-by applies the same readouts to the HLL state of the requested group. These queries reuse the maintained HLL states rather than constructing a separate sketch for each readout. 
Finally, the readout $\rho_q$ produces $\widehat y_q$, which is appended to the model input as explicit evidence, and decoding continues from the retained KV cache. Figure~\ref{fig:method} illustrates this data flow.

\subsection{State construction from streams}
\label{sec:state_construction}
We first construct a fixed-size state that records the distinct identities in each stream. For this purpose, we use HyperLogLog (HLL)~\citep{hll2007}. Stream $i$ maintains the state
$s_i=(M_{i,1},\ldots,M_{i,m})$,
where $m$ is the number of registers (size) in the state and all registers are initialized to zero.
A fixed hash function maps each identity $z$ to a register index $j(z)$ and a rank $r(z)$:
$\operatorname{Hash}(z)=(j(z),r(z))$. Only the register selected by $j(z)$ is updated:
\begin{equation}
M_{i,j(z)}
\leftarrow
\max\!\left\{M_{i,j(z)},r(z)\right\}.
\label{eq:hll_update}
\end{equation}
Because the hash function is fixed, every occurrence of the same identity produces the same pair $(j(z),r(z))$. Once the selected register has reached $r(z)$, other occurrences of $z$ have no further effect. Processing all identities in $\mathcal R^{(i)}$ in this way produces the final state $s_i$.

For grouped requests, we first use the group key $g_t^{(i)}$ to select the state associated with that group, then, $z_t^{(i)}$ updates one of its registers according to~\eqref{eq:hll_update}. A repeated identity therefore has no additional effect within the same group, but an occurrence in another group updates that group's state independently.

A stream can also be processed in $P_i$ segments. Let
$s_i^{[a]}=(M_{i,1}^{[a]},\ldots,M_{i,m}^{[a]})$
be the state produced from segment $a$. When all segment states use the same register count, hash function, and seed, the state of the full stream is obtained by taking the maximum in each register:
\begin{equation}
s_i=\bigoplus_{a=1}^{P_i}s_i^{[a]},
\qquad
M_{i,j}
=
\max_{1\leq a\leq P_i}M_{i,j}^{[a]}.
\label{eq:operand_segment_merge}
\end{equation}
The merge produces the same HLL state compared with processing the complete stream as a whole. States associated with different streams or groups remain separate because later readouts may need to distinguish where each identity occurred.

The HLL construction above supports uniqueness-based queries, which are the primary focus of this paper. However, we note that hash-based methods can also handle frequency (multiplicity) queries that accumulate repeated occurrences. For this purpose, we use Count--Min Sketch~\citep{countmin2004}, which maintains a fixed-size hash table and estimates an identity's frequency from its associated counters. Compatible Count--Min states can be merged by elementwise addition. The detailed update and readout rules are provided in Appendix~D.

\subsection{State reuse and query readout}
\label{sec:hll_reuse}

Once the HLL state of a stream has been constructed, it can be reused for multiple set queries without rebuilding the state. Let $\widehat F_0(s)$ denote the specific standard HLL cardinality estimator corresponding to the $\rho_q$ in \eqref{eq_defrho}. A distinct-count query reads the cardinality directly from one state, whereas a union query first forms a temporary registerwise merge and then reads its cardinality:
\begin{equation}
\begin{aligned}
\widehat y_q
=\rho_q(s_i)
&=\widehat F_0(s_i),
&& q\text{ asks for distinct count},\\
\widehat y_q
=\rho_q(s_1,s_2)
&=\widehat F_0(s_1\oplus s_2),
&& q\text{ asks for union}.
\end{aligned}
\label{eq:hll_cardinality_readouts}
\end{equation}
The union merge is temporary and does not overwrite $s_1$ or $s_2$. Their separate states remain available for other readouts.

For Jaccard similarity and containment, we use the HLL joint maximum-likelihood estimator (JMLE)~\citep{ertl2017new}. Let $s_1$ and $s_2$ represent sets $A$ and $B$, respectively, and define
$n_{10}=|A\setminus B|$,
$n_{01}=|B\setminus A|$, and
$n_{11}=|A\cap B|$.
JMLE estimates these three disjoint regions. The corresponding readouts are
\begin{equation}
\begin{aligned}
\widehat y_q
&=\rho_q(s_1,s_2)\\
&=
\begin{cases}
\dfrac{\widehat n_{11}}
{\widehat n_{10}+\widehat n_{01}+\widehat n_{11}},
& \text{Jaccard similarity},\\[8pt]
\dfrac{\widehat n_{11}}
{\widehat n_{10}+\widehat n_{11}},
& \text{containment of }A\text{ in }B.
\end{cases}
\end{aligned}
\label{eq:hll_relation_readouts}
\end{equation}

Union continues to use the registerwise merge in Equation~\eqref{eq:hll_cardinality_readouts}. If the JMLE optimizer does not return finite, feasible estimates that satisfy its convergence checks, the relation readout is marked invalid rather than replaced with an inclusion--exclusion estimate based on
$|A\cap B|=|A|+|B|-|A\cup B|$.
Appendix~C.4 provides details of the likelihood and optimization procedure.

Because the sketch states are preserved after construction, distinct count, union size, Jaccard similarity, and containment can all be obtained from the same HLL registers by changing only the readout. Group-by applies the same reuse within each selected group. An existing state may also be reused for a later request, provided that its definition, identity mapping, register count, hash function, and seed remain unchanged.

\noindent \textbf{Complexity.}
Because our approach performs aggregation on the model side, we briefly summarize the storage and computational complexity of the HLL states. For storage, our implementation represents each HLL register using one byte. Thus, an HLL state with $m=2048$, (i.e., the number of registers in each state) occupies $2$ KiB. For a fixed number of streams and active groups, processing additional records only updates the existing states and does not increase their size. Each new stream or active group requires one additional HLL state, while group-by queries also require an index over the active group keys.

For computation, each HLL update modifies a single register and therefore takes $O(1)$ time. Registerwise state merging and cardinality estimation each require $O(m)$ ($m$ is fixed by the state budget). A JMLE readout with $I$ optimization steps takes $O(mI)$ time.

\section{Experiments}
\label{sec:experiments}
\label{sec:results}

\begin{figure*}[!tb]
\centering
\includegraphics[width=.8\textwidth]{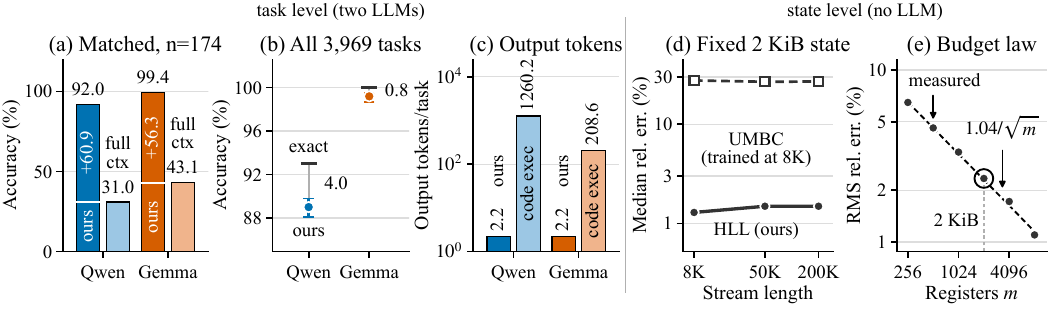}
\caption{Main results. Colour is the model (\textcolor[HTML]{0072B2}{\textbf{blue}} Qwen 3.6 (35B), \textcolor[HTML]{D55E00}{\textbf{vermillion}} Gemma 4 (31B)); solid fill is SketchOps, pale fill the comparison method. \textbf{(a)} 174 matched tasks against the strongest full-context baseline. \textbf{(b)} All 3,969 tasks: the black tick is exact aggregation and the whisker the 95\% window-cluster CI of the paired gap. \textbf{(c)} Mean output tokens per task against external code execution. \textbf{(d,\,e)} State level, no language model: relative error stays flat in stream length under a fixed 2\,KiB budget and follows $1.04/\sqrt{m}$.}
\label{fig:main}
\end{figure*}

This section evaluates whether fixed-budget aggregation states can preserve statistics from long record streams and subsequently support downstream reasoning. We validate our method on scenario-based \emph{aggregate-then-reason} tasks, which require the model to obtain the relevant aggregate from the records and then use it as intermediate evidence to make a final decision. We compare our method with exact aggregation, full-context reasoning, and code execution. Additionally, to examine the effectiveness of our method at the component level, we isolate the behavior of the aggregation states themselves through experiments that measure estimation error, merge consistency, the resource--error trade-off, and computational overhead. Figure~\ref{fig:main} summarizes the main results from these evaluations.

\subsection{Evaluation setup}
\label{sec:evaluation_setup}

The reasoning experiments use the publicly available Oolong-Synth test set~\citep{bertsch2025oolong}. This dataset organizes news articles, user comments, question answering, natural-language inference, and various linguistic data into long record sequences,  with user identities and category labels provided for each record. We selected 174 record sequences from this dataset, referring to each sequence as a source window. These windows have lengths ranging from 1K to 4M tokens.

Based on the labels provided by the dataset, we converted each source window into a structured identity stream, then constructed multiple \emph{aggregate-then-reason} tasks from the same records. The required aggregates include distinct counts, grouped distinct counts, set relationships, and a small number of point-frequency queries. Each task also specifies a decision condition that maps the aggregate to one of two candidate answers. Under this setup, the aggregate serves only as intermediate evidence, and the model must still make a final selection based on it. For example, the model selects between ``Handle Immediately'' and ``Continue Monitoring'', depending on whether more than 100 users report an error. Because multiple tasks were constructed from each source window, the 174 windows yielded 3,969 tasks in total. These tasks were used to compare fixed-budget aggregation states with exact aggregation and to conduct a counterfactual evidence-use test. We evaluated Qwen 3.6 (35B) and Gemma 4 (31B) using the same tasks, aggregation states, prompts, and scoring rules.

Accuracy for all reasoning tasks was calculated based on whether the model's final selection matched the reference answer. Model-call failures, context overflows, and unparseable answers were counted as errors. 
Because tasks generated from the same source window share the underlying records, they cannot be considered independent of one another. Thus, we treated a source window and its derived tasks as a single resampling unit when computing confidence intervals. We repeated this resampling procedure 10,000 times and calculated 95\% confidence intervals from the resulting distribution. When presenting results, we refer to our method as \textsc{SketchOps}.

\subsection{Explicit aggregation states improve downstream reasoning}

The full-context reasoning comparison used 174 matched tasks. Since a single source window can generate multiple related questions, we selected one question from each window according to a predetermined, answer-independent rule, ensuring that each underlying record sequence appeared only once in the comparison. The full-context baselines directly read the original records and used direct answering~\cite{kojima2022large}, chain-of-thought (CoT) reasoning~\cite{wei2022chain}, or guided-choice prompting~\cite{geng2023grammar}. Our method first processes the same records with fixed-budget aggregation states, then passes the resulting aggregate to the model. All methods receive the same questions and candidate answers. Tasks exceeding the 150K-token context limit remain in the evaluation and are scored as incorrect.

Table~\ref{tab:matched_state_value} reports these results. On Qwen, our method achieves 92.0\%, which is 60.9 percentage points higher than the strongest full-context baseline, CoT. On Gemma, our method achieves 99.4\%, exceeding the best full-context result by 56.3 points. Part of this difference comes from context length. Of the 174 tasks, 80 exceed the 150K-token limit and are therefore scored as incorrect for the full-context baselines, while our method still achieves 86.2\% on Qwen and 98.8\% on Gemma for these tasks. More importantly, the advantage remains among the 94 tasks that fit within the context limit. Under this subset, our method reaches 96.8\% and 100\%, whereas the strongest full-context baseline reaches 57.4\% and 79.8\%, respectively. These results suggest two benefits of the proposed aggregation state. First, it preserves the required aggregate when the records exceed the model's context limit. Second, it gives the model explicit results for deduplication, grouping, and set relations when the records do fit.

\begin{table}[t]
\centering
\small
\setlength{\tabcolsep}{4pt}
\renewcommand{\arraystretch}{1.08}
\caption{Accuracy on 174 matched aggregate-then-reason tasks. One task is selected from each source window, and all methods use the same question and answer choices.}
\label{tab:matched_state_value}
\begin{tabular}{
@{}
L{0.48\columnwidth}
C{0.20\columnwidth}
C{0.20\columnwidth}
@{}}
\toprule
Method & Qwen (\%) & Gemma (\%) \\
\midrule

\textsc{SketchOps} (ours) &
\textbf{92.0} &
\textbf{99.4} \\

Direct full-context reasoning &
28.7 &
43.1 \\

Full context $+$ CoT &
31.0 &
36.2 \\

Full context $+$ guided choice &
27.6 &
43.1 \\

\bottomrule
\end{tabular}
\end{table}

We next evaluate our method on all 3,969 tasks and measure the accuracy degradation introduced by the fixed-budget state. The exact-aggregation condition in Table~\ref{tab:large_state_results} follows the same pipeline as our method but replaces the fixed-budget readout with an exact aggregate computed from the complete streams. On Qwen, our method achieves 89.0\%, while exact aggregation achieves 93.0\%, yielding a gap of 4.0 percentage points (95\% window-cluster CI: 3.2--4.9 points). On Gemma 4 (31B, BF16), the corresponding accuracies are 99.2\% and 100.0\%, with a gap of 0.8 percentage points (95\% window-cluster CI: 0.5--1.3 percentage points). This comparison measures the downstream accuracy lost through fixed-budget approximation. The preceding matched experiment separately measures the value of supplying an explicit aggregation state rather than asking the model to recover the same information from the full records. Overall, the fixed-budget state preserves most of the downstream performance of exact aggregation while retaining bounded memory usage and model-side operation.

Finally, we test whether the model's decision output responds to the supplied aggregate. For each task, we replace the aggregate with a counterfactual aggregate that changes the correct answer while leaving the decision condition unchanged. The model selects the outcome implied by the modified aggregate on 92.8\% of the Qwen tasks and 100.0\% of the Gemma tasks. In most cases, the downstream decision changes, which confirms that the model generally uses the supplied aggregate as evidence when making its final decision.
\begin{table}[t]
\centering
\small
\setlength{\tabcolsep}{3.5pt}
\renewcommand{\arraystretch}{1.08}
\caption{Results on 3,969 aggregate-then-reason tasks. The exact-aggregate row replaces the fixed-budget readout with the exact value. The counterfactual row replaces the aggregate with a value that changes the correct answer and is scored against the outcome implied by the modified evidence.}
\label{tab:large_state_results}
\begin{tabular}{
@{}
L{0.48\columnwidth}
C{0.20\columnwidth}
C{0.20\columnwidth}
@{}}
\toprule
Condition & Qwen (\%) & Gemma (\%) \\
\midrule

\multicolumn{3}{@{}l}{\emph{Approximation comparison}} \\

\textsc{SketchOps} (ours) &
89.0 &
99.2 \\

\quad with exact aggregate &
93.0 &
100.0 \\

\addlinespace
\multicolumn{3}{@{}l}{\emph{Evidence-use test}} \\

\quad with counterfactual &
92.8 &
100.0 \\

\bottomrule
\end{tabular}
\end{table}

\subsection{Comparison with external code execution}

We also compared \textsc{SketchOps} (ours) with exact code execution~\cite{gao2023pal,chen2022program} using 1,200 tasks preselected from the set of 3,969 tasks, 
which covered all 174 source windows and different aggregation types. Qwen and Gemma were evaluated using the same tasks and experimental protocol. In the code-execution path, the model first generates a complete Python program, which is then run by a controlled executor that returns the result. With our method, the same model directly receives the fixed-budget state readout and generates only the final selection. Table~\ref{tab:code_execution_comparison} reports accuracy and generated output tokens for the two approaches. An independent CPU verifier re-derives every code-execution answer from the original records.

\begin{table}[t]
\centering
\small
\setlength{\tabcolsep}{2.5pt}
\renewcommand{\arraystretch}{1.08}
\caption{Comparison of \textsc{SketchOps} (ours) and external code execution on the same 1,200 tasks for each model. Output tokens are reported as the mean per task.}
\label{tab:code_execution_comparison}
\begin{tabular}{
@{}
L{0.16\columnwidth}
L{0.34\columnwidth}
C{0.16\columnwidth}
C{0.22\columnwidth}
@{}}
\toprule
Model & Method & Acc. (\%) & Tokens / task \\
\midrule

Qwen &
\textsc{SketchOps} &
91.1 & 2.2 \\

Qwen &
Code execution &
100.0 & 1,260.2 \\

\addlinespace

Gemma &
\textsc{SketchOps} &
99.3 & 2.2 \\

Gemma &
Code execution &
100.0 & 208.6 \\

\bottomrule
\end{tabular}
\end{table}
On Qwen, code execution exceeded \textsc{SketchOps} (ours) by 8.9 percentage points (95\% source-window-cluster bootstrap CI: 7.1--10.7 percentage points); on Gemma, the difference was less than one percentage point. This accuracy advantage came with a much longer code generation path. Code execution produced an average of 1,260.2 output tokens on Qwen and 208.6 on Gemma, compared with 2.2 for \textsc{SketchOps} on both models. 
\textsc{SketchOps} generated only the final decision and did not generate or execute a program. While external code execution is reliable when exactness of the result matters most, our \textsc{SketchOps} is intended for a different setting, where an aggregation state must be updated and merged within a fixed memory budget and its readout must be available without generating a program.

\subsection{Length extrapolation and resource--error trade-off}
\label{sec:length_budget}

The preceding downstream results rely on the aggregation state providing reliable statistics. To test this in an isolated environment, we fix the identity streams and exact targets and compare only the state-readout errors. In the length-extrapolation experiment, both methods use a $2$ KiB carried state for each record stream. UMBC~\cite{umbc2023} is trained on record streams of length $8$K. HLL requires no training. During testing, UMBC and HLL process the same identity streams, with the longest stream containing $200$K records. The number of distinct identities in each stream is sampled independently of stream length from $1{,}000$ to $4{,}000$. Therefore, neither method can infer the exact distinct count from the number of records alone.

\begin{table}[t]
\centering
\small
\setlength{\tabcolsep}{5pt}
\caption{Length extrapolation under a fixed $2$ KiB carried-state budget. Each length contains 100 test streams; entries are median relative errors for distinct count. UMBC's static model parameters are excluded from the per-stream state budget.}
\label{tab:length_extrapolation}
\begin{tabular}{@{}lccc@{}}
\toprule
Stream length $L$ & HLL & Official UMBC & UMBC/HLL \\
\midrule
$8$K   & \textbf{1.29\%} & 27.60\% & $21.4\times$ \\
$50$K  & \textbf{1.49\%} & 26.78\% & $18.0\times$ \\
$200$K & \textbf{1.49\%} & 26.95\% & $18.1\times$ \\
\bottomrule
\end{tabular}
\end{table}

Table~\ref{tab:length_extrapolation} shows that, as the number of records increased from $8$K to $200$K, HLL's median relative error rose only from 1.29\% to 1.49\%, while its state remained fixed at $2$ KiB. Under the same carried-state budget, UMBC's error remained close to 27\%. This comparison isolates the aggregation state used for distinct count. It shows that an update rule matched to set semantics can track distinct cardinality without learning this behavior from training data.

A fixed state size also comes at a cost in accuracy. We therefore varied the number of HLL registers $m$ and measured the root-mean-square (RMS) relative error over a grid in which the number of distinct identities ranged from $100$ to $1{,}000{,}000$. The current implementation allocates one byte per register, so the state size in bytes corresponds directly to $m$. We also tested whether segmenting a stream changes the HLL result. After dividing each identity stream into as many as 256 segments and merging the resulting states in different orders, we found no register mismatches or readout differences from a single pass over the full stream.

\begin{table}[t]
\centering
\small
\setlength{\tabcolsep}{5pt}
\caption{Resource--error trade-off for HLL. The measured column reports RMS relative error; the final column gives the reference value $1.04/\sqrt{m}$.}
\label{tab:budget_tradeoff}
\begin{tabular}{@{}rccc@{}}
\toprule
Registers $m$ & State size & Measured error & Reference \\
\midrule
$256$     & $0.25$ KiB & 6.47\% & 6.50\% \\
$512$     & $0.50$ KiB & 4.60\% & 4.60\% \\
$1{,}024$ & $1$ KiB    & 3.33\% & 3.25\% \\
$2{,}048$ & $2$ KiB    & 2.34\% & 2.30\% \\
$4{,}096$ & $4$ KiB    & 1.72\% & 1.63\% \\
$8{,}192$ & $8$ KiB    & 1.10\% & 1.15\% \\
\bottomrule
\end{tabular}
\end{table}

Table~\ref{tab:budget_tradeoff} shows that the measured error generally follows the reference value $1.04/\sqrt{m}$, the standard asymptotic relative error for HLL. This error is caused by the approximation using a fixed register budget that does not expand with additional stream length. 
The storage crossover between a 2-KiB HLL sketch state and an exact identity set is reported in Appendix~F.3.

\section{Conclusion}
\label{sec:conclusion}

We introduce a model-side aggregation interface for long-context reasoning that maintains compact aggregation states alongside a frozen language model through HyperLogLog (HLL). The interface supports distinct counting, set combination, overlap estimation, and grouped aggregation with controlled error and bounded resource usage. Different from external execution methods, it does not require an additional generate--execute--return cycle. For each stream or active group, it maintains one fixed-budget sketch state, whose memory does not grow with context length or set cardinality.
Through experiment validation using Qwen and Gemma, we show that our method substantially outperforms direct full-context reasoning and chain-of-thought (CoT) reasoning. The counterfactual test further confirms that the models change their final choices in response to the supplied aggregate. State-level experiments show that state size remains fixed as the record stream grows, while states constructed from separate segments merge into exactly the same state produced by a single pass over the full stream.

\noindent\textbf{Limitations:} Although each sketch operator uses a fixed-size state, it introduces a trade-off between estimation error for bounded state size. Dedicated storage or external execution might still be preferred for small-scale tasks that require exact numerical outputs.
The fixed state size applies to a single stream or active group, so total aggregation memory still grows with the number of states maintained. 
Also, the current interface supports only the aggregation targets considered here: distinct count, union size, Jaccard similarity, containment, grouped distinct count, and point frequency. Additional targets require corresponding state implementations and readout functions.

Our future work involves dynamically selecting between exact sets and sketches with different budgets based on set cardinality, error requirements, and available resources. We will also extend the state and readout interfaces to additional aggregation operators. 

\bibliography{refs}

\clearpage
\onecolumn
\providecommand{\ATRClaimField}[2]{%
  \ifcsname ATRClaim@#1@#2\endcsname
    \csname ATRClaim@#1@#2\endcsname
  \else
    \PackageError{atr-paper-results}{Unknown claim field `#1/#2'}{}%
  \fi}
\providecommand{\ATRClaimValue}[1]{\ATRClaimField{#1}{value}}
\providecommand{\ATRClaimN}[1]{\ATRClaimField{#1}{n}}
\providecommand{\ATRClaimWindows}[1]{\ATRClaimField{#1}{windows}}
\providecommand{\ATRClaimCILow}[1]{\ATRClaimField{#1}{ciLow}}
\providecommand{\ATRClaimCIHigh}[1]{\ATRClaimField{#1}{ciHigh}}
\providecommand{\ATRClaimDiagnostics}[1]{\ATRClaimField{#1}{diagnostics}}
\providecommand{\ATRClaimDiagnostic}[2]{\ATRClaimField{#1}{diagnostic@#2}}
\providecommand{\ATRClaimUnit}[1]{\ATRClaimField{#1}{unit}}
\providecommand{\ATRClaimScope}[1]{\ATRClaimField{#1}{scope}}
\providecommand{\ATRClaimCount}{69}
\expandafter\def\csname ATRClaim@cross.gemma.budgeted_minus_cot@value\endcsname{0.632183908045977}
\expandafter\def\csname ATRClaim@cross.gemma.budgeted_minus_cot@n\endcsname{174}
\expandafter\def\csname ATRClaim@cross.gemma.budgeted_minus_cot@windows\endcsname{174}
\expandafter\def\csname ATRClaim@cross.gemma.budgeted_minus_cot@ciLow\endcsname{0.5632183908045977}
\expandafter\def\csname ATRClaim@cross.gemma.budgeted_minus_cot@ciHigh\endcsname{0.7011494252873564}
\expandafter\def\csname ATRClaim@cross.gemma.budgeted_minus_cot@diagnostics\endcsname{\{\}}
\expandafter\def\csname ATRClaim@cross.gemma.budgeted_minus_cot@unit\endcsname{null}
\expandafter\def\csname ATRClaim@cross.gemma.budgeted_minus_cot@scope\endcsname{null}
\expandafter\def\csname ATRClaim@cross.gemma.budgeted_minus_direct@value\endcsname{0.5632183908045977}
\expandafter\def\csname ATRClaim@cross.gemma.budgeted_minus_direct@n\endcsname{174}
\expandafter\def\csname ATRClaim@cross.gemma.budgeted_minus_direct@windows\endcsname{174}
\expandafter\def\csname ATRClaim@cross.gemma.budgeted_minus_direct@ciLow\endcsname{0.4885057471264368}
\expandafter\def\csname ATRClaim@cross.gemma.budgeted_minus_direct@ciHigh\endcsname{0.6379310344827587}
\expandafter\def\csname ATRClaim@cross.gemma.budgeted_minus_direct@diagnostics\endcsname{\{\}}
\expandafter\def\csname ATRClaim@cross.gemma.budgeted_minus_direct@unit\endcsname{null}
\expandafter\def\csname ATRClaim@cross.gemma.budgeted_minus_direct@scope\endcsname{null}
\expandafter\def\csname ATRClaim@cross.gemma.budgeted_minus_guided@value\endcsname{0.5632183908045977}
\expandafter\def\csname ATRClaim@cross.gemma.budgeted_minus_guided@n\endcsname{174}
\expandafter\def\csname ATRClaim@cross.gemma.budgeted_minus_guided@windows\endcsname{174}
\expandafter\def\csname ATRClaim@cross.gemma.budgeted_minus_guided@ciLow\endcsname{0.4885057471264368}
\expandafter\def\csname ATRClaim@cross.gemma.budgeted_minus_guided@ciHigh\endcsname{0.6379310344827587}
\expandafter\def\csname ATRClaim@cross.gemma.budgeted_minus_guided@diagnostics\endcsname{\{\}}
\expandafter\def\csname ATRClaim@cross.gemma.budgeted_minus_guided@unit\endcsname{null}
\expandafter\def\csname ATRClaim@cross.gemma.budgeted_minus_guided@scope\endcsname{null}
\expandafter\def\csname ATRClaim@cross.gemma.commoneligible_budgeted@value\endcsname{1.0}
\expandafter\def\csname ATRClaim@cross.gemma.commoneligible_budgeted@n\endcsname{94}
\expandafter\def\csname ATRClaim@cross.gemma.commoneligible_budgeted@windows\endcsname{94}
\expandafter\def\csname ATRClaim@cross.gemma.commoneligible_budgeted@ciLow\endcsname{1.0}
\expandafter\def\csname ATRClaim@cross.gemma.commoneligible_budgeted@ciHigh\endcsname{1.0}
\expandafter\def\csname ATRClaim@cross.gemma.commoneligible_budgeted@diagnostics\endcsname{\{"failure\_stage":\{"none":94\},"n\_context\_ineligible":0,"n\_parse\_failed":0,"n\_transport\_or\_harness\_failed":0,"n\_truncated\_at\_cap":0\}}
\expandafter\def\csname ATRClaim@cross.gemma.commoneligible_budgeted@unit\endcsname{null}
\expandafter\def\csname ATRClaim@cross.gemma.commoneligible_budgeted@scope\endcsname{common\_context\_eligible\_sensitivity}
\expandafter\def\csname ATRClaim@cross.gemma.commoneligible_budgeted@diagnostic@failure_stage\endcsname{\{"none":94\}}
\expandafter\def\csname ATRClaim@cross.gemma.commoneligible_budgeted@diagnostic@n_context_ineligible\endcsname{0}
\expandafter\def\csname ATRClaim@cross.gemma.commoneligible_budgeted@diagnostic@n_parse_failed\endcsname{0}
\expandafter\def\csname ATRClaim@cross.gemma.commoneligible_budgeted@diagnostic@n_transport_or_harness_failed\endcsname{0}
\expandafter\def\csname ATRClaim@cross.gemma.commoneligible_budgeted@diagnostic@n_truncated_at_cap\endcsname{0}
\expandafter\def\csname ATRClaim@cross.gemma.commoneligible_budgeted_minus_guided@value\endcsname{0.20212765957446807}
\expandafter\def\csname ATRClaim@cross.gemma.commoneligible_budgeted_minus_guided@n\endcsname{94}
\expandafter\def\csname ATRClaim@cross.gemma.commoneligible_budgeted_minus_guided@windows\endcsname{94}
\expandafter\def\csname ATRClaim@cross.gemma.commoneligible_budgeted_minus_guided@ciLow\endcsname{0.1276595744680851}
\expandafter\def\csname ATRClaim@cross.gemma.commoneligible_budgeted_minus_guided@ciHigh\endcsname{0.2872340425531915}
\expandafter\def\csname ATRClaim@cross.gemma.commoneligible_budgeted_minus_guided@diagnostics\endcsname{\{\}}
\expandafter\def\csname ATRClaim@cross.gemma.commoneligible_budgeted_minus_guided@unit\endcsname{null}
\expandafter\def\csname ATRClaim@cross.gemma.commoneligible_budgeted_minus_guided@scope\endcsname{common\_context\_eligible\_sensitivity}
\expandafter\def\csname ATRClaim@cross.gemma.commoneligible_guided@value\endcsname{0.7978723404255319}
\expandafter\def\csname ATRClaim@cross.gemma.commoneligible_guided@n\endcsname{94}
\expandafter\def\csname ATRClaim@cross.gemma.commoneligible_guided@windows\endcsname{94}
\expandafter\def\csname ATRClaim@cross.gemma.commoneligible_guided@ciLow\endcsname{0.7127659574468085}
\expandafter\def\csname ATRClaim@cross.gemma.commoneligible_guided@ciHigh\endcsname{0.8723404255319149}
\expandafter\def\csname ATRClaim@cross.gemma.commoneligible_guided@diagnostics\endcsname{\{"failure\_stage":\{"none":94\},"n\_context\_ineligible":0,"n\_parse\_failed":0,"n\_transport\_or\_harness\_failed":0,"n\_truncated\_at\_cap":0\}}
\expandafter\def\csname ATRClaim@cross.gemma.commoneligible_guided@unit\endcsname{null}
\expandafter\def\csname ATRClaim@cross.gemma.commoneligible_guided@scope\endcsname{common\_context\_eligible\_sensitivity}
\expandafter\def\csname ATRClaim@cross.gemma.commoneligible_guided@diagnostic@failure_stage\endcsname{\{"none":94\}}
\expandafter\def\csname ATRClaim@cross.gemma.commoneligible_guided@diagnostic@n_context_ineligible\endcsname{0}
\expandafter\def\csname ATRClaim@cross.gemma.commoneligible_guided@diagnostic@n_parse_failed\endcsname{0}
\expandafter\def\csname ATRClaim@cross.gemma.commoneligible_guided@diagnostic@n_transport_or_harness_failed\endcsname{0}
\expandafter\def\csname ATRClaim@cross.gemma.commoneligible_guided@diagnostic@n_truncated_at_cap\endcsname{0}
\expandafter\def\csname ATRClaim@cross.gemma.cot_context@value\endcsname{0.3620689655172414}
\expandafter\def\csname ATRClaim@cross.gemma.cot_context@n\endcsname{174}
\expandafter\def\csname ATRClaim@cross.gemma.cot_context@windows\endcsname{174}
\expandafter\def\csname ATRClaim@cross.gemma.cot_context@ciLow\endcsname{0.29310344827586204}
\expandafter\def\csname ATRClaim@cross.gemma.cot_context@ciHigh\endcsname{0.43103448275862066}
\expandafter\def\csname ATRClaim@cross.gemma.cot_context@diagnostics\endcsname{\{"failure\_as\_zero":true,"failure\_by\_stage":\{"context\_packing":80,"parsing":28\},"n\_failed\_flag\_true":80,"n\_failure\_events":108,"n\_parse\_failed":28\}}
\expandafter\def\csname ATRClaim@cross.gemma.cot_context@unit\endcsname{null}
\expandafter\def\csname ATRClaim@cross.gemma.cot_context@scope\endcsname{null}
\expandafter\def\csname ATRClaim@cross.gemma.cot_context@diagnostic@failure_as_zero\endcsname{true}
\expandafter\def\csname ATRClaim@cross.gemma.cot_context@diagnostic@failure_by_stage\endcsname{\{"context\_packing":80,"parsing":28\}}
\expandafter\def\csname ATRClaim@cross.gemma.cot_context@diagnostic@n_failed_flag_true\endcsname{80}
\expandafter\def\csname ATRClaim@cross.gemma.cot_context@diagnostic@n_failure_events\endcsname{108}
\expandafter\def\csname ATRClaim@cross.gemma.cot_context@diagnostic@n_parse_failed\endcsname{28}
\expandafter\def\csname ATRClaim@cross.gemma.direct_context@value\endcsname{0.43103448275862066}
\expandafter\def\csname ATRClaim@cross.gemma.direct_context@n\endcsname{174}
\expandafter\def\csname ATRClaim@cross.gemma.direct_context@windows\endcsname{174}
\expandafter\def\csname ATRClaim@cross.gemma.direct_context@ciLow\endcsname{0.3563218390804598}
\expandafter\def\csname ATRClaim@cross.gemma.direct_context@ciHigh\endcsname{0.5057471264367817}
\expandafter\def\csname ATRClaim@cross.gemma.direct_context@diagnostics\endcsname{\{"failure\_as\_zero":true,"failure\_by\_stage":\{"context\_packing":80\},"n\_failed\_flag\_true":80,"n\_failure\_events":80,"n\_parse\_failed":0\}}
\expandafter\def\csname ATRClaim@cross.gemma.direct_context@unit\endcsname{null}
\expandafter\def\csname ATRClaim@cross.gemma.direct_context@scope\endcsname{null}
\expandafter\def\csname ATRClaim@cross.gemma.direct_context@diagnostic@failure_as_zero\endcsname{true}
\expandafter\def\csname ATRClaim@cross.gemma.direct_context@diagnostic@failure_by_stage\endcsname{\{"context\_packing":80\}}
\expandafter\def\csname ATRClaim@cross.gemma.direct_context@diagnostic@n_failed_flag_true\endcsname{80}
\expandafter\def\csname ATRClaim@cross.gemma.direct_context@diagnostic@n_failure_events\endcsname{80}
\expandafter\def\csname ATRClaim@cross.gemma.direct_context@diagnostic@n_parse_failed\endcsname{0}
\expandafter\def\csname ATRClaim@cross.gemma.full_method_aggregate_only@value\endcsname{0.5083333333333333}
\expandafter\def\csname ATRClaim@cross.gemma.full_method_aggregate_only@n\endcsname{1200}
\expandafter\def\csname ATRClaim@cross.gemma.full_method_aggregate_only@windows\endcsname{174}
\expandafter\def\csname ATRClaim@cross.gemma.full_method_aggregate_only@ciLow\endcsname{0.48218724109362054}
\expandafter\def\csname ATRClaim@cross.gemma.full_method_aggregate_only@ciHigh\endcsname{0.5344398340248963}
\expandafter\def\csname ATRClaim@cross.gemma.full_method_aggregate_only@diagnostics\endcsname{\{"failure\_as\_zero":true,"failure\_by\_stage":\{\},"n\_failed\_flag\_true":0,"n\_failure\_events":0,"n\_parse\_failed":0\}}
\expandafter\def\csname ATRClaim@cross.gemma.full_method_aggregate_only@unit\endcsname{null}
\expandafter\def\csname ATRClaim@cross.gemma.full_method_aggregate_only@scope\endcsname{standalone\_full\_method}
\expandafter\def\csname ATRClaim@cross.gemma.full_method_aggregate_only@diagnostic@failure_as_zero\endcsname{true}
\expandafter\def\csname ATRClaim@cross.gemma.full_method_aggregate_only@diagnostic@failure_by_stage\endcsname{\{\}}
\expandafter\def\csname ATRClaim@cross.gemma.full_method_aggregate_only@diagnostic@n_failed_flag_true\endcsname{0}
\expandafter\def\csname ATRClaim@cross.gemma.full_method_aggregate_only@diagnostic@n_failure_events\endcsname{0}
\expandafter\def\csname ATRClaim@cross.gemma.full_method_aggregate_only@diagnostic@n_parse_failed\endcsname{0}
\expandafter\def\csname ATRClaim@cross.gemma.full_method_budgeted@value\endcsname{0.9925}
\expandafter\def\csname ATRClaim@cross.gemma.full_method_budgeted@n\endcsname{1200}
\expandafter\def\csname ATRClaim@cross.gemma.full_method_budgeted@windows\endcsname{174}
\expandafter\def\csname ATRClaim@cross.gemma.full_method_budgeted@ciLow\endcsname{0.9866999168744804}
\expandafter\def\csname ATRClaim@cross.gemma.full_method_budgeted@ciHigh\endcsname{0.9974916387959866}
\expandafter\def\csname ATRClaim@cross.gemma.full_method_budgeted@diagnostics\endcsname{\{"failure\_as\_zero":true,"failure\_by\_stage":\{\},"n\_failed\_flag\_true":0,"n\_failure\_events":0,"n\_parse\_failed":0\}}
\expandafter\def\csname ATRClaim@cross.gemma.full_method_budgeted@unit\endcsname{null}
\expandafter\def\csname ATRClaim@cross.gemma.full_method_budgeted@scope\endcsname{standalone\_full\_method}
\expandafter\def\csname ATRClaim@cross.gemma.full_method_budgeted@diagnostic@failure_as_zero\endcsname{true}
\expandafter\def\csname ATRClaim@cross.gemma.full_method_budgeted@diagnostic@failure_by_stage\endcsname{\{\}}
\expandafter\def\csname ATRClaim@cross.gemma.full_method_budgeted@diagnostic@n_failed_flag_true\endcsname{0}
\expandafter\def\csname ATRClaim@cross.gemma.full_method_budgeted@diagnostic@n_failure_events\endcsname{0}
\expandafter\def\csname ATRClaim@cross.gemma.full_method_budgeted@diagnostic@n_parse_failed\endcsname{0}
\expandafter\def\csname ATRClaim@cross.gemma.full_method_budgeted_minus_aggregate_only@value\endcsname{0.4841666666666667}
\expandafter\def\csname ATRClaim@cross.gemma.full_method_budgeted_minus_aggregate_only@n\endcsname{1200}
\expandafter\def\csname ATRClaim@cross.gemma.full_method_budgeted_minus_aggregate_only@windows\endcsname{174}
\expandafter\def\csname ATRClaim@cross.gemma.full_method_budgeted_minus_aggregate_only@ciLow\endcsname{0.45615514333895446}
\expandafter\def\csname ATRClaim@cross.gemma.full_method_budgeted_minus_aggregate_only@ciHigh\endcsname{0.5116279069767442}
\expandafter\def\csname ATRClaim@cross.gemma.full_method_budgeted_minus_aggregate_only@diagnostics\endcsname{\{\}}
\expandafter\def\csname ATRClaim@cross.gemma.full_method_budgeted_minus_aggregate_only@unit\endcsname{null}
\expandafter\def\csname ATRClaim@cross.gemma.full_method_budgeted_minus_aggregate_only@scope\endcsname{standalone\_full\_method}
\expandafter\def\csname ATRClaim@cross.gemma.full_method_budgeted_minus_exact@value\endcsname{-0.0075}
\expandafter\def\csname ATRClaim@cross.gemma.full_method_budgeted_minus_exact@n\endcsname{1200}
\expandafter\def\csname ATRClaim@cross.gemma.full_method_budgeted_minus_exact@windows\endcsname{174}
\expandafter\def\csname ATRClaim@cross.gemma.full_method_budgeted_minus_exact@ciLow\endcsname{-0.013300083125519535}
\expandafter\def\csname ATRClaim@cross.gemma.full_method_budgeted_minus_exact@ciHigh\endcsname{-0.002508361204013378}
\expandafter\def\csname ATRClaim@cross.gemma.full_method_budgeted_minus_exact@diagnostics\endcsname{\{\}}
\expandafter\def\csname ATRClaim@cross.gemma.full_method_budgeted_minus_exact@unit\endcsname{null}
\expandafter\def\csname ATRClaim@cross.gemma.full_method_budgeted_minus_exact@scope\endcsname{standalone\_full\_method}
\expandafter\def\csname ATRClaim@cross.gemma.full_method_budgeted_minus_policy_only@value\endcsname{0.6916666666666667}
\expandafter\def\csname ATRClaim@cross.gemma.full_method_budgeted_minus_policy_only@n\endcsname{1200}
\expandafter\def\csname ATRClaim@cross.gemma.full_method_budgeted_minus_policy_only@windows\endcsname{174}
\expandafter\def\csname ATRClaim@cross.gemma.full_method_budgeted_minus_policy_only@ciLow\endcsname{0.6606260296540363}
\expandafter\def\csname ATRClaim@cross.gemma.full_method_budgeted_minus_policy_only@ciHigh\endcsname{0.7215295095594347}
\expandafter\def\csname ATRClaim@cross.gemma.full_method_budgeted_minus_policy_only@diagnostics\endcsname{\{\}}
\expandafter\def\csname ATRClaim@cross.gemma.full_method_budgeted_minus_policy_only@unit\endcsname{null}
\expandafter\def\csname ATRClaim@cross.gemma.full_method_budgeted_minus_policy_only@scope\endcsname{standalone\_full\_method}
\expandafter\def\csname ATRClaim@cross.gemma.full_method_exact@value\endcsname{1.0}
\expandafter\def\csname ATRClaim@cross.gemma.full_method_exact@n\endcsname{1200}
\expandafter\def\csname ATRClaim@cross.gemma.full_method_exact@windows\endcsname{174}
\expandafter\def\csname ATRClaim@cross.gemma.full_method_exact@ciLow\endcsname{1.0}
\expandafter\def\csname ATRClaim@cross.gemma.full_method_exact@ciHigh\endcsname{1.0}
\expandafter\def\csname ATRClaim@cross.gemma.full_method_exact@diagnostics\endcsname{\{"failure\_as\_zero":true,"failure\_by\_stage":\{\},"n\_failed\_flag\_true":0,"n\_failure\_events":0,"n\_parse\_failed":0\}}
\expandafter\def\csname ATRClaim@cross.gemma.full_method_exact@unit\endcsname{null}
\expandafter\def\csname ATRClaim@cross.gemma.full_method_exact@scope\endcsname{standalone\_full\_method}
\expandafter\def\csname ATRClaim@cross.gemma.full_method_exact@diagnostic@failure_as_zero\endcsname{true}
\expandafter\def\csname ATRClaim@cross.gemma.full_method_exact@diagnostic@failure_by_stage\endcsname{\{\}}
\expandafter\def\csname ATRClaim@cross.gemma.full_method_exact@diagnostic@n_failed_flag_true\endcsname{0}
\expandafter\def\csname ATRClaim@cross.gemma.full_method_exact@diagnostic@n_failure_events\endcsname{0}
\expandafter\def\csname ATRClaim@cross.gemma.full_method_exact@diagnostic@n_parse_failed\endcsname{0}
\expandafter\def\csname ATRClaim@cross.gemma.full_method_policy_only@value\endcsname{0.30083333333333334}
\expandafter\def\csname ATRClaim@cross.gemma.full_method_policy_only@n\endcsname{1200}
\expandafter\def\csname ATRClaim@cross.gemma.full_method_policy_only@windows\endcsname{174}
\expandafter\def\csname ATRClaim@cross.gemma.full_method_policy_only@ciLow\endcsname{0.27056827820186596}
\expandafter\def\csname ATRClaim@cross.gemma.full_method_policy_only@ciHigh\endcsname{0.33249791144527985}
\expandafter\def\csname ATRClaim@cross.gemma.full_method_policy_only@diagnostics\endcsname{\{"failure\_as\_zero":true,"failure\_by\_stage":\{"parsing":488\},"n\_failed\_flag\_true":0,"n\_failure\_events":488,"n\_parse\_failed":488\}}
\expandafter\def\csname ATRClaim@cross.gemma.full_method_policy_only@unit\endcsname{null}
\expandafter\def\csname ATRClaim@cross.gemma.full_method_policy_only@scope\endcsname{standalone\_full\_method}
\expandafter\def\csname ATRClaim@cross.gemma.full_method_policy_only@diagnostic@failure_as_zero\endcsname{true}
\expandafter\def\csname ATRClaim@cross.gemma.full_method_policy_only@diagnostic@failure_by_stage\endcsname{\{"parsing":488\}}
\expandafter\def\csname ATRClaim@cross.gemma.full_method_policy_only@diagnostic@n_failed_flag_true\endcsname{0}
\expandafter\def\csname ATRClaim@cross.gemma.full_method_policy_only@diagnostic@n_failure_events\endcsname{488}
\expandafter\def\csname ATRClaim@cross.gemma.full_method_policy_only@diagnostic@n_parse_failed\endcsname{488}
\expandafter\def\csname ATRClaim@cross.gemma.guided_context@value\endcsname{0.43103448275862066}
\expandafter\def\csname ATRClaim@cross.gemma.guided_context@n\endcsname{174}
\expandafter\def\csname ATRClaim@cross.gemma.guided_context@windows\endcsname{174}
\expandafter\def\csname ATRClaim@cross.gemma.guided_context@ciLow\endcsname{0.3563218390804598}
\expandafter\def\csname ATRClaim@cross.gemma.guided_context@ciHigh\endcsname{0.5057471264367817}
\expandafter\def\csname ATRClaim@cross.gemma.guided_context@diagnostics\endcsname{\{"failure\_as\_zero":true,"failure\_by\_stage":\{"context\_packing":80\},"n\_failed\_flag\_true":80,"n\_failure\_events":80,"n\_parse\_failed":0\}}
\expandafter\def\csname ATRClaim@cross.gemma.guided_context@unit\endcsname{null}
\expandafter\def\csname ATRClaim@cross.gemma.guided_context@scope\endcsname{null}
\expandafter\def\csname ATRClaim@cross.gemma.guided_context@diagnostic@failure_as_zero\endcsname{true}
\expandafter\def\csname ATRClaim@cross.gemma.guided_context@diagnostic@failure_by_stage\endcsname{\{"context\_packing":80\}}
\expandafter\def\csname ATRClaim@cross.gemma.guided_context@diagnostic@n_failed_flag_true\endcsname{80}
\expandafter\def\csname ATRClaim@cross.gemma.guided_context@diagnostic@n_failure_events\endcsname{80}
\expandafter\def\csname ATRClaim@cross.gemma.guided_context@diagnostic@n_parse_failed\endcsname{0}
\expandafter\def\csname ATRClaim@cross.gemma.joint_common_budgeted@value\endcsname{1.0}
\expandafter\def\csname ATRClaim@cross.gemma.joint_common_budgeted@n\endcsname{94}
\expandafter\def\csname ATRClaim@cross.gemma.joint_common_budgeted@windows\endcsname{94}
\expandafter\def\csname ATRClaim@cross.gemma.joint_common_budgeted@ciLow\endcsname{1.0}
\expandafter\def\csname ATRClaim@cross.gemma.joint_common_budgeted@ciHigh\endcsname{1.0}
\expandafter\def\csname ATRClaim@cross.gemma.joint_common_budgeted@diagnostics\endcsname{\{"failure\_stage":\{"none":94\},"n\_context\_ineligible":0,"n\_parse\_failed":0,"n\_transport\_or\_harness\_failed":0,"n\_truncated\_at\_cap":0\}}
\expandafter\def\csname ATRClaim@cross.gemma.joint_common_budgeted@unit\endcsname{null}
\expandafter\def\csname ATRClaim@cross.gemma.joint_common_budgeted@scope\endcsname{joint\_common\_context\_eligible\_sensitivity}
\expandafter\def\csname ATRClaim@cross.gemma.joint_common_budgeted@diagnostic@failure_stage\endcsname{\{"none":94\}}
\expandafter\def\csname ATRClaim@cross.gemma.joint_common_budgeted@diagnostic@n_context_ineligible\endcsname{0}
\expandafter\def\csname ATRClaim@cross.gemma.joint_common_budgeted@diagnostic@n_parse_failed\endcsname{0}
\expandafter\def\csname ATRClaim@cross.gemma.joint_common_budgeted@diagnostic@n_transport_or_harness_failed\endcsname{0}
\expandafter\def\csname ATRClaim@cross.gemma.joint_common_budgeted@diagnostic@n_truncated_at_cap\endcsname{0}
\expandafter\def\csname ATRClaim@cross.gemma.joint_common_budgeted_minus_guided@value\endcsname{0.20212765957446807}
\expandafter\def\csname ATRClaim@cross.gemma.joint_common_budgeted_minus_guided@n\endcsname{94}
\expandafter\def\csname ATRClaim@cross.gemma.joint_common_budgeted_minus_guided@windows\endcsname{94}
\expandafter\def\csname ATRClaim@cross.gemma.joint_common_budgeted_minus_guided@ciLow\endcsname{0.1276595744680851}
\expandafter\def\csname ATRClaim@cross.gemma.joint_common_budgeted_minus_guided@ciHigh\endcsname{0.2872340425531915}
\expandafter\def\csname ATRClaim@cross.gemma.joint_common_budgeted_minus_guided@diagnostics\endcsname{\{\}}
\expandafter\def\csname ATRClaim@cross.gemma.joint_common_budgeted_minus_guided@unit\endcsname{null}
\expandafter\def\csname ATRClaim@cross.gemma.joint_common_budgeted_minus_guided@scope\endcsname{joint\_common\_context\_eligible\_sensitivity}
\expandafter\def\csname ATRClaim@cross.gemma.joint_common_guided@value\endcsname{0.7978723404255319}
\expandafter\def\csname ATRClaim@cross.gemma.joint_common_guided@n\endcsname{94}
\expandafter\def\csname ATRClaim@cross.gemma.joint_common_guided@windows\endcsname{94}
\expandafter\def\csname ATRClaim@cross.gemma.joint_common_guided@ciLow\endcsname{0.7127659574468085}
\expandafter\def\csname ATRClaim@cross.gemma.joint_common_guided@ciHigh\endcsname{0.8723404255319149}
\expandafter\def\csname ATRClaim@cross.gemma.joint_common_guided@diagnostics\endcsname{\{"failure\_stage":\{"none":94\},"n\_context\_ineligible":0,"n\_parse\_failed":0,"n\_transport\_or\_harness\_failed":0,"n\_truncated\_at\_cap":0\}}
\expandafter\def\csname ATRClaim@cross.gemma.joint_common_guided@unit\endcsname{null}
\expandafter\def\csname ATRClaim@cross.gemma.joint_common_guided@scope\endcsname{joint\_common\_context\_eligible\_sensitivity}
\expandafter\def\csname ATRClaim@cross.gemma.joint_common_guided@diagnostic@failure_stage\endcsname{\{"none":94\}}
\expandafter\def\csname ATRClaim@cross.gemma.joint_common_guided@diagnostic@n_context_ineligible\endcsname{0}
\expandafter\def\csname ATRClaim@cross.gemma.joint_common_guided@diagnostic@n_parse_failed\endcsname{0}
\expandafter\def\csname ATRClaim@cross.gemma.joint_common_guided@diagnostic@n_transport_or_harness_failed\endcsname{0}
\expandafter\def\csname ATRClaim@cross.gemma.joint_common_guided@diagnostic@n_truncated_at_cap\endcsname{0}
\expandafter\def\csname ATRClaim@cross.gemma.matched_budgeted@value\endcsname{0.9942528735632183}
\expandafter\def\csname ATRClaim@cross.gemma.matched_budgeted@n\endcsname{174}
\expandafter\def\csname ATRClaim@cross.gemma.matched_budgeted@windows\endcsname{174}
\expandafter\def\csname ATRClaim@cross.gemma.matched_budgeted@ciLow\endcsname{0.9827586206896551}
\expandafter\def\csname ATRClaim@cross.gemma.matched_budgeted@ciHigh\endcsname{1.0}
\expandafter\def\csname ATRClaim@cross.gemma.matched_budgeted@diagnostics\endcsname{\{"failure\_stage":\{"none":174\},"n\_context\_ineligible":0,"n\_parse\_failed":0,"n\_transport\_or\_harness\_failed":0,"n\_truncated\_at\_cap":0\}}
\expandafter\def\csname ATRClaim@cross.gemma.matched_budgeted@unit\endcsname{null}
\expandafter\def\csname ATRClaim@cross.gemma.matched_budgeted@scope\endcsname{null}
\expandafter\def\csname ATRClaim@cross.gemma.matched_budgeted@diagnostic@failure_stage\endcsname{\{"none":174\}}
\expandafter\def\csname ATRClaim@cross.gemma.matched_budgeted@diagnostic@n_context_ineligible\endcsname{0}
\expandafter\def\csname ATRClaim@cross.gemma.matched_budgeted@diagnostic@n_parse_failed\endcsname{0}
\expandafter\def\csname ATRClaim@cross.gemma.matched_budgeted@diagnostic@n_transport_or_harness_failed\endcsname{0}
\expandafter\def\csname ATRClaim@cross.gemma.matched_budgeted@diagnostic@n_truncated_at_cap\endcsname{0}
\expandafter\def\csname ATRClaim@cross.qwen.budgeted_minus_cot@value\endcsname{0.6091954022988506}
\expandafter\def\csname ATRClaim@cross.qwen.budgeted_minus_cot@n\endcsname{174}
\expandafter\def\csname ATRClaim@cross.qwen.budgeted_minus_cot@windows\endcsname{174}
\expandafter\def\csname ATRClaim@cross.qwen.budgeted_minus_cot@ciLow\endcsname{0.5402298850574713}
\expandafter\def\csname ATRClaim@cross.qwen.budgeted_minus_cot@ciHigh\endcsname{0.6839080459770115}
\expandafter\def\csname ATRClaim@cross.qwen.budgeted_minus_cot@diagnostics\endcsname{\{\}}
\expandafter\def\csname ATRClaim@cross.qwen.budgeted_minus_cot@unit\endcsname{null}
\expandafter\def\csname ATRClaim@cross.qwen.budgeted_minus_cot@scope\endcsname{null}
\expandafter\def\csname ATRClaim@cross.qwen.budgeted_minus_direct@value\endcsname{0.632183908045977}
\expandafter\def\csname ATRClaim@cross.qwen.budgeted_minus_direct@n\endcsname{174}
\expandafter\def\csname ATRClaim@cross.qwen.budgeted_minus_direct@windows\endcsname{174}
\expandafter\def\csname ATRClaim@cross.qwen.budgeted_minus_direct@ciLow\endcsname{0.5632183908045977}
\expandafter\def\csname ATRClaim@cross.qwen.budgeted_minus_direct@ciHigh\endcsname{0.7011494252873564}
\expandafter\def\csname ATRClaim@cross.qwen.budgeted_minus_direct@diagnostics\endcsname{\{\}}
\expandafter\def\csname ATRClaim@cross.qwen.budgeted_minus_direct@unit\endcsname{null}
\expandafter\def\csname ATRClaim@cross.qwen.budgeted_minus_direct@scope\endcsname{null}
\expandafter\def\csname ATRClaim@cross.qwen.budgeted_minus_guided@value\endcsname{0.6436781609195402}
\expandafter\def\csname ATRClaim@cross.qwen.budgeted_minus_guided@n\endcsname{174}
\expandafter\def\csname ATRClaim@cross.qwen.budgeted_minus_guided@windows\endcsname{174}
\expandafter\def\csname ATRClaim@cross.qwen.budgeted_minus_guided@ciLow\endcsname{0.5747126436781609}
\expandafter\def\csname ATRClaim@cross.qwen.budgeted_minus_guided@ciHigh\endcsname{0.7126436781609196}
\expandafter\def\csname ATRClaim@cross.qwen.budgeted_minus_guided@diagnostics\endcsname{\{\}}
\expandafter\def\csname ATRClaim@cross.qwen.budgeted_minus_guided@unit\endcsname{null}
\expandafter\def\csname ATRClaim@cross.qwen.budgeted_minus_guided@scope\endcsname{null}
\expandafter\def\csname ATRClaim@cross.qwen.commoneligible_budgeted@value\endcsname{0.9680851063829787}
\expandafter\def\csname ATRClaim@cross.qwen.commoneligible_budgeted@n\endcsname{94}
\expandafter\def\csname ATRClaim@cross.qwen.commoneligible_budgeted@windows\endcsname{94}
\expandafter\def\csname ATRClaim@cross.qwen.commoneligible_budgeted@ciLow\endcsname{0.925531914893617}
\expandafter\def\csname ATRClaim@cross.qwen.commoneligible_budgeted@ciHigh\endcsname{1.0}
\expandafter\def\csname ATRClaim@cross.qwen.commoneligible_budgeted@diagnostics\endcsname{\{"failure\_stage":\{"none":94\},"n\_context\_ineligible":0,"n\_parse\_failed":0,"n\_transport\_or\_harness\_failed":0,"n\_truncated\_at\_cap":0\}}
\expandafter\def\csname ATRClaim@cross.qwen.commoneligible_budgeted@unit\endcsname{null}
\expandafter\def\csname ATRClaim@cross.qwen.commoneligible_budgeted@scope\endcsname{common\_context\_eligible\_sensitivity}
\expandafter\def\csname ATRClaim@cross.qwen.commoneligible_budgeted@diagnostic@failure_stage\endcsname{\{"none":94\}}
\expandafter\def\csname ATRClaim@cross.qwen.commoneligible_budgeted@diagnostic@n_context_ineligible\endcsname{0}
\expandafter\def\csname ATRClaim@cross.qwen.commoneligible_budgeted@diagnostic@n_parse_failed\endcsname{0}
\expandafter\def\csname ATRClaim@cross.qwen.commoneligible_budgeted@diagnostic@n_transport_or_harness_failed\endcsname{0}
\expandafter\def\csname ATRClaim@cross.qwen.commoneligible_budgeted@diagnostic@n_truncated_at_cap\endcsname{0}
\expandafter\def\csname ATRClaim@cross.qwen.commoneligible_budgeted_minus_guided@value\endcsname{0.4574468085106383}
\expandafter\def\csname ATRClaim@cross.qwen.commoneligible_budgeted_minus_guided@n\endcsname{94}
\expandafter\def\csname ATRClaim@cross.qwen.commoneligible_budgeted_minus_guided@windows\endcsname{94}
\expandafter\def\csname ATRClaim@cross.qwen.commoneligible_budgeted_minus_guided@ciLow\endcsname{0.3617021276595745}
\expandafter\def\csname ATRClaim@cross.qwen.commoneligible_budgeted_minus_guided@ciHigh\endcsname{0.5531914893617021}
\expandafter\def\csname ATRClaim@cross.qwen.commoneligible_budgeted_minus_guided@diagnostics\endcsname{\{\}}
\expandafter\def\csname ATRClaim@cross.qwen.commoneligible_budgeted_minus_guided@unit\endcsname{null}
\expandafter\def\csname ATRClaim@cross.qwen.commoneligible_budgeted_minus_guided@scope\endcsname{common\_context\_eligible\_sensitivity}
\expandafter\def\csname ATRClaim@cross.qwen.commoneligible_guided@value\endcsname{0.5106382978723404}
\expandafter\def\csname ATRClaim@cross.qwen.commoneligible_guided@n\endcsname{94}
\expandafter\def\csname ATRClaim@cross.qwen.commoneligible_guided@windows\endcsname{94}
\expandafter\def\csname ATRClaim@cross.qwen.commoneligible_guided@ciLow\endcsname{0.4148936170212766}
\expandafter\def\csname ATRClaim@cross.qwen.commoneligible_guided@ciHigh\endcsname{0.6170212765957447}
\expandafter\def\csname ATRClaim@cross.qwen.commoneligible_guided@diagnostics\endcsname{\{"failure\_stage":\{"none":94\},"n\_context\_ineligible":0,"n\_parse\_failed":0,"n\_transport\_or\_harness\_failed":0,"n\_truncated\_at\_cap":0\}}
\expandafter\def\csname ATRClaim@cross.qwen.commoneligible_guided@unit\endcsname{null}
\expandafter\def\csname ATRClaim@cross.qwen.commoneligible_guided@scope\endcsname{common\_context\_eligible\_sensitivity}
\expandafter\def\csname ATRClaim@cross.qwen.commoneligible_guided@diagnostic@failure_stage\endcsname{\{"none":94\}}
\expandafter\def\csname ATRClaim@cross.qwen.commoneligible_guided@diagnostic@n_context_ineligible\endcsname{0}
\expandafter\def\csname ATRClaim@cross.qwen.commoneligible_guided@diagnostic@n_parse_failed\endcsname{0}
\expandafter\def\csname ATRClaim@cross.qwen.commoneligible_guided@diagnostic@n_transport_or_harness_failed\endcsname{0}
\expandafter\def\csname ATRClaim@cross.qwen.commoneligible_guided@diagnostic@n_truncated_at_cap\endcsname{0}
\expandafter\def\csname ATRClaim@cross.qwen.cot_context@value\endcsname{0.3103448275862069}
\expandafter\def\csname ATRClaim@cross.qwen.cot_context@n\endcsname{174}
\expandafter\def\csname ATRClaim@cross.qwen.cot_context@windows\endcsname{174}
\expandafter\def\csname ATRClaim@cross.qwen.cot_context@ciLow\endcsname{0.2413793103448276}
\expandafter\def\csname ATRClaim@cross.qwen.cot_context@ciHigh\endcsname{0.3793103448275862}
\expandafter\def\csname ATRClaim@cross.qwen.cot_context@diagnostics\endcsname{\{"failure\_as\_zero":true,"failure\_by\_stage":\{"context\_packing":80,"parsing":34\},"n\_failed\_flag\_true":80,"n\_failure\_events":114,"n\_parse\_failed":34\}}
\expandafter\def\csname ATRClaim@cross.qwen.cot_context@unit\endcsname{null}
\expandafter\def\csname ATRClaim@cross.qwen.cot_context@scope\endcsname{null}
\expandafter\def\csname ATRClaim@cross.qwen.cot_context@diagnostic@failure_as_zero\endcsname{true}
\expandafter\def\csname ATRClaim@cross.qwen.cot_context@diagnostic@failure_by_stage\endcsname{\{"context\_packing":80,"parsing":34\}}
\expandafter\def\csname ATRClaim@cross.qwen.cot_context@diagnostic@n_failed_flag_true\endcsname{80}
\expandafter\def\csname ATRClaim@cross.qwen.cot_context@diagnostic@n_failure_events\endcsname{114}
\expandafter\def\csname ATRClaim@cross.qwen.cot_context@diagnostic@n_parse_failed\endcsname{34}
\expandafter\def\csname ATRClaim@cross.qwen.direct_context@value\endcsname{0.28735632183908044}
\expandafter\def\csname ATRClaim@cross.qwen.direct_context@n\endcsname{174}
\expandafter\def\csname ATRClaim@cross.qwen.direct_context@windows\endcsname{174}
\expandafter\def\csname ATRClaim@cross.qwen.direct_context@ciLow\endcsname{0.22413793103448276}
\expandafter\def\csname ATRClaim@cross.qwen.direct_context@ciHigh\endcsname{0.3563218390804598}
\expandafter\def\csname ATRClaim@cross.qwen.direct_context@diagnostics\endcsname{\{"failure\_as\_zero":true,"failure\_by\_stage":\{"context\_packing":80,"parsing":13\},"n\_failed\_flag\_true":80,"n\_failure\_events":93,"n\_parse\_failed":13\}}
\expandafter\def\csname ATRClaim@cross.qwen.direct_context@unit\endcsname{null}
\expandafter\def\csname ATRClaim@cross.qwen.direct_context@scope\endcsname{null}
\expandafter\def\csname ATRClaim@cross.qwen.direct_context@diagnostic@failure_as_zero\endcsname{true}
\expandafter\def\csname ATRClaim@cross.qwen.direct_context@diagnostic@failure_by_stage\endcsname{\{"context\_packing":80,"parsing":13\}}
\expandafter\def\csname ATRClaim@cross.qwen.direct_context@diagnostic@n_failed_flag_true\endcsname{80}
\expandafter\def\csname ATRClaim@cross.qwen.direct_context@diagnostic@n_failure_events\endcsname{93}
\expandafter\def\csname ATRClaim@cross.qwen.direct_context@diagnostic@n_parse_failed\endcsname{13}
\expandafter\def\csname ATRClaim@cross.qwen.full_method_aggregate_only@value\endcsname{0.5158333333333334}
\expandafter\def\csname ATRClaim@cross.qwen.full_method_aggregate_only@n\endcsname{1200}
\expandafter\def\csname ATRClaim@cross.qwen.full_method_aggregate_only@windows\endcsname{174}
\expandafter\def\csname ATRClaim@cross.qwen.full_method_aggregate_only@ciLow\endcsname{0.48825503355704697}
\expandafter\def\csname ATRClaim@cross.qwen.full_method_aggregate_only@ciHigh\endcsname{0.5429752066115703}
\expandafter\def\csname ATRClaim@cross.qwen.full_method_aggregate_only@diagnostics\endcsname{\{"failure\_as\_zero":true,"failure\_by\_stage":\{\},"n\_failed\_flag\_true":0,"n\_failure\_events":0,"n\_parse\_failed":0\}}
\expandafter\def\csname ATRClaim@cross.qwen.full_method_aggregate_only@unit\endcsname{null}
\expandafter\def\csname ATRClaim@cross.qwen.full_method_aggregate_only@scope\endcsname{standalone\_full\_method}
\expandafter\def\csname ATRClaim@cross.qwen.full_method_aggregate_only@diagnostic@failure_as_zero\endcsname{true}
\expandafter\def\csname ATRClaim@cross.qwen.full_method_aggregate_only@diagnostic@failure_by_stage\endcsname{\{\}}
\expandafter\def\csname ATRClaim@cross.qwen.full_method_aggregate_only@diagnostic@n_failed_flag_true\endcsname{0}
\expandafter\def\csname ATRClaim@cross.qwen.full_method_aggregate_only@diagnostic@n_failure_events\endcsname{0}
\expandafter\def\csname ATRClaim@cross.qwen.full_method_aggregate_only@diagnostic@n_parse_failed\endcsname{0}
\expandafter\def\csname ATRClaim@cross.qwen.full_method_budgeted@value\endcsname{0.9108333333333334}
\expandafter\def\csname ATRClaim@cross.qwen.full_method_budgeted@n\endcsname{1200}
\expandafter\def\csname ATRClaim@cross.qwen.full_method_budgeted@windows\endcsname{174}
\expandafter\def\csname ATRClaim@cross.qwen.full_method_budgeted@ciLow\endcsname{0.8932443703085905}
\expandafter\def\csname ATRClaim@cross.qwen.full_method_budgeted@ciHigh\endcsname{0.9280936454849499}
\expandafter\def\csname ATRClaim@cross.qwen.full_method_budgeted@diagnostics\endcsname{\{"failure\_as\_zero":true,"failure\_by\_stage":\{\},"n\_failed\_flag\_true":0,"n\_failure\_events":0,"n\_parse\_failed":0\}}
\expandafter\def\csname ATRClaim@cross.qwen.full_method_budgeted@unit\endcsname{null}
\expandafter\def\csname ATRClaim@cross.qwen.full_method_budgeted@scope\endcsname{standalone\_full\_method}
\expandafter\def\csname ATRClaim@cross.qwen.full_method_budgeted@diagnostic@failure_as_zero\endcsname{true}
\expandafter\def\csname ATRClaim@cross.qwen.full_method_budgeted@diagnostic@failure_by_stage\endcsname{\{\}}
\expandafter\def\csname ATRClaim@cross.qwen.full_method_budgeted@diagnostic@n_failed_flag_true\endcsname{0}
\expandafter\def\csname ATRClaim@cross.qwen.full_method_budgeted@diagnostic@n_failure_events\endcsname{0}
\expandafter\def\csname ATRClaim@cross.qwen.full_method_budgeted@diagnostic@n_parse_failed\endcsname{0}
\expandafter\def\csname ATRClaim@cross.qwen.full_method_budgeted_minus_aggregate_only@value\endcsname{0.395}
\expandafter\def\csname ATRClaim@cross.qwen.full_method_budgeted_minus_aggregate_only@n\endcsname{1200}
\expandafter\def\csname ATRClaim@cross.qwen.full_method_budgeted_minus_aggregate_only@windows\endcsname{174}
\expandafter\def\csname ATRClaim@cross.qwen.full_method_budgeted_minus_aggregate_only@ciLow\endcsname{0.3594936708860759}
\expandafter\def\csname ATRClaim@cross.qwen.full_method_budgeted_minus_aggregate_only@ciHigh\endcsname{0.43013468013468015}
\expandafter\def\csname ATRClaim@cross.qwen.full_method_budgeted_minus_aggregate_only@diagnostics\endcsname{\{\}}
\expandafter\def\csname ATRClaim@cross.qwen.full_method_budgeted_minus_aggregate_only@unit\endcsname{null}
\expandafter\def\csname ATRClaim@cross.qwen.full_method_budgeted_minus_aggregate_only@scope\endcsname{standalone\_full\_method}
\expandafter\def\csname ATRClaim@cross.qwen.full_method_budgeted_minus_exact@value\endcsname{-0.030833333333333334}
\expandafter\def\csname ATRClaim@cross.qwen.full_method_budgeted_minus_exact@n\endcsname{1200}
\expandafter\def\csname ATRClaim@cross.qwen.full_method_budgeted_minus_exact@windows\endcsname{174}
\expandafter\def\csname ATRClaim@cross.qwen.full_method_budgeted_minus_exact@ciLow\endcsname{-0.04492512479201331}
\expandafter\def\csname ATRClaim@cross.qwen.full_method_budgeted_minus_exact@ciHigh\endcsname{-0.01728395061728395}
\expandafter\def\csname ATRClaim@cross.qwen.full_method_budgeted_minus_exact@diagnostics\endcsname{\{\}}
\expandafter\def\csname ATRClaim@cross.qwen.full_method_budgeted_minus_exact@unit\endcsname{null}
\expandafter\def\csname ATRClaim@cross.qwen.full_method_budgeted_minus_exact@scope\endcsname{standalone\_full\_method}
\expandafter\def\csname ATRClaim@cross.qwen.full_method_budgeted_minus_policy_only@value\endcsname{0.3858333333333333}
\expandafter\def\csname ATRClaim@cross.qwen.full_method_budgeted_minus_policy_only@n\endcsname{1200}
\expandafter\def\csname ATRClaim@cross.qwen.full_method_budgeted_minus_policy_only@windows\endcsname{174}
\expandafter\def\csname ATRClaim@cross.qwen.full_method_budgeted_minus_policy_only@ciLow\endcsname{0.3502495840266223}
\expandafter\def\csname ATRClaim@cross.qwen.full_method_budgeted_minus_policy_only@ciHigh\endcsname{0.4211409395973154}
\expandafter\def\csname ATRClaim@cross.qwen.full_method_budgeted_minus_policy_only@diagnostics\endcsname{\{\}}
\expandafter\def\csname ATRClaim@cross.qwen.full_method_budgeted_minus_policy_only@unit\endcsname{null}
\expandafter\def\csname ATRClaim@cross.qwen.full_method_budgeted_minus_policy_only@scope\endcsname{standalone\_full\_method}
\expandafter\def\csname ATRClaim@cross.qwen.full_method_exact@value\endcsname{0.9416666666666667}
\expandafter\def\csname ATRClaim@cross.qwen.full_method_exact@n\endcsname{1200}
\expandafter\def\csname ATRClaim@cross.qwen.full_method_exact@windows\endcsname{174}
\expandafter\def\csname ATRClaim@cross.qwen.full_method_exact@ciLow\endcsname{0.9265822784810127}
\expandafter\def\csname ATRClaim@cross.qwen.full_method_exact@ciHigh\endcsname{0.9556113902847572}
\expandafter\def\csname ATRClaim@cross.qwen.full_method_exact@diagnostics\endcsname{\{"failure\_as\_zero":true,"failure\_by\_stage":\{\},"n\_failed\_flag\_true":0,"n\_failure\_events":0,"n\_parse\_failed":0\}}
\expandafter\def\csname ATRClaim@cross.qwen.full_method_exact@unit\endcsname{null}
\expandafter\def\csname ATRClaim@cross.qwen.full_method_exact@scope\endcsname{standalone\_full\_method}
\expandafter\def\csname ATRClaim@cross.qwen.full_method_exact@diagnostic@failure_as_zero\endcsname{true}
\expandafter\def\csname ATRClaim@cross.qwen.full_method_exact@diagnostic@failure_by_stage\endcsname{\{\}}
\expandafter\def\csname ATRClaim@cross.qwen.full_method_exact@diagnostic@n_failed_flag_true\endcsname{0}
\expandafter\def\csname ATRClaim@cross.qwen.full_method_exact@diagnostic@n_failure_events\endcsname{0}
\expandafter\def\csname ATRClaim@cross.qwen.full_method_exact@diagnostic@n_parse_failed\endcsname{0}
\expandafter\def\csname ATRClaim@cross.qwen.full_method_policy_only@value\endcsname{0.525}
\expandafter\def\csname ATRClaim@cross.qwen.full_method_policy_only@n\endcsname{1200}
\expandafter\def\csname ATRClaim@cross.qwen.full_method_policy_only@windows\endcsname{174}
\expandafter\def\csname ATRClaim@cross.qwen.full_method_policy_only@ciLow\endcsname{0.4983221476510067}
\expandafter\def\csname ATRClaim@cross.qwen.full_method_policy_only@ciHigh\endcsname{0.5516372795969773}
\expandafter\def\csname ATRClaim@cross.qwen.full_method_policy_only@diagnostics\endcsname{\{"failure\_as\_zero":true,"failure\_by\_stage":\{\},"n\_failed\_flag\_true":0,"n\_failure\_events":0,"n\_parse\_failed":0\}}
\expandafter\def\csname ATRClaim@cross.qwen.full_method_policy_only@unit\endcsname{null}
\expandafter\def\csname ATRClaim@cross.qwen.full_method_policy_only@scope\endcsname{standalone\_full\_method}
\expandafter\def\csname ATRClaim@cross.qwen.full_method_policy_only@diagnostic@failure_as_zero\endcsname{true}
\expandafter\def\csname ATRClaim@cross.qwen.full_method_policy_only@diagnostic@failure_by_stage\endcsname{\{\}}
\expandafter\def\csname ATRClaim@cross.qwen.full_method_policy_only@diagnostic@n_failed_flag_true\endcsname{0}
\expandafter\def\csname ATRClaim@cross.qwen.full_method_policy_only@diagnostic@n_failure_events\endcsname{0}
\expandafter\def\csname ATRClaim@cross.qwen.full_method_policy_only@diagnostic@n_parse_failed\endcsname{0}
\expandafter\def\csname ATRClaim@cross.qwen.guided_context@value\endcsname{0.27586206896551724}
\expandafter\def\csname ATRClaim@cross.qwen.guided_context@n\endcsname{174}
\expandafter\def\csname ATRClaim@cross.qwen.guided_context@windows\endcsname{174}
\expandafter\def\csname ATRClaim@cross.qwen.guided_context@ciLow\endcsname{0.21264367816091953}
\expandafter\def\csname ATRClaim@cross.qwen.guided_context@ciHigh\endcsname{0.3448275862068966}
\expandafter\def\csname ATRClaim@cross.qwen.guided_context@diagnostics\endcsname{\{"failure\_as\_zero":true,"failure\_by\_stage":\{"context\_packing":80\},"n\_failed\_flag\_true":80,"n\_failure\_events":80,"n\_parse\_failed":0\}}
\expandafter\def\csname ATRClaim@cross.qwen.guided_context@unit\endcsname{null}
\expandafter\def\csname ATRClaim@cross.qwen.guided_context@scope\endcsname{null}
\expandafter\def\csname ATRClaim@cross.qwen.guided_context@diagnostic@failure_as_zero\endcsname{true}
\expandafter\def\csname ATRClaim@cross.qwen.guided_context@diagnostic@failure_by_stage\endcsname{\{"context\_packing":80\}}
\expandafter\def\csname ATRClaim@cross.qwen.guided_context@diagnostic@n_failed_flag_true\endcsname{80}
\expandafter\def\csname ATRClaim@cross.qwen.guided_context@diagnostic@n_failure_events\endcsname{80}
\expandafter\def\csname ATRClaim@cross.qwen.guided_context@diagnostic@n_parse_failed\endcsname{0}
\expandafter\def\csname ATRClaim@cross.qwen.joint_common_budgeted@value\endcsname{0.9680851063829787}
\expandafter\def\csname ATRClaim@cross.qwen.joint_common_budgeted@n\endcsname{94}
\expandafter\def\csname ATRClaim@cross.qwen.joint_common_budgeted@windows\endcsname{94}
\expandafter\def\csname ATRClaim@cross.qwen.joint_common_budgeted@ciLow\endcsname{0.925531914893617}
\expandafter\def\csname ATRClaim@cross.qwen.joint_common_budgeted@ciHigh\endcsname{1.0}
\expandafter\def\csname ATRClaim@cross.qwen.joint_common_budgeted@diagnostics\endcsname{\{"failure\_stage":\{"none":94\},"n\_context\_ineligible":0,"n\_parse\_failed":0,"n\_transport\_or\_harness\_failed":0,"n\_truncated\_at\_cap":0\}}
\expandafter\def\csname ATRClaim@cross.qwen.joint_common_budgeted@unit\endcsname{null}
\expandafter\def\csname ATRClaim@cross.qwen.joint_common_budgeted@scope\endcsname{joint\_common\_context\_eligible\_sensitivity}
\expandafter\def\csname ATRClaim@cross.qwen.joint_common_budgeted@diagnostic@failure_stage\endcsname{\{"none":94\}}
\expandafter\def\csname ATRClaim@cross.qwen.joint_common_budgeted@diagnostic@n_context_ineligible\endcsname{0}
\expandafter\def\csname ATRClaim@cross.qwen.joint_common_budgeted@diagnostic@n_parse_failed\endcsname{0}
\expandafter\def\csname ATRClaim@cross.qwen.joint_common_budgeted@diagnostic@n_transport_or_harness_failed\endcsname{0}
\expandafter\def\csname ATRClaim@cross.qwen.joint_common_budgeted@diagnostic@n_truncated_at_cap\endcsname{0}
\expandafter\def\csname ATRClaim@cross.qwen.joint_common_budgeted_minus_guided@value\endcsname{0.4574468085106383}
\expandafter\def\csname ATRClaim@cross.qwen.joint_common_budgeted_minus_guided@n\endcsname{94}
\expandafter\def\csname ATRClaim@cross.qwen.joint_common_budgeted_minus_guided@windows\endcsname{94}
\expandafter\def\csname ATRClaim@cross.qwen.joint_common_budgeted_minus_guided@ciLow\endcsname{0.3617021276595745}
\expandafter\def\csname ATRClaim@cross.qwen.joint_common_budgeted_minus_guided@ciHigh\endcsname{0.5531914893617021}
\expandafter\def\csname ATRClaim@cross.qwen.joint_common_budgeted_minus_guided@diagnostics\endcsname{\{\}}
\expandafter\def\csname ATRClaim@cross.qwen.joint_common_budgeted_minus_guided@unit\endcsname{null}
\expandafter\def\csname ATRClaim@cross.qwen.joint_common_budgeted_minus_guided@scope\endcsname{joint\_common\_context\_eligible\_sensitivity}
\expandafter\def\csname ATRClaim@cross.qwen.joint_common_guided@value\endcsname{0.5106382978723404}
\expandafter\def\csname ATRClaim@cross.qwen.joint_common_guided@n\endcsname{94}
\expandafter\def\csname ATRClaim@cross.qwen.joint_common_guided@windows\endcsname{94}
\expandafter\def\csname ATRClaim@cross.qwen.joint_common_guided@ciLow\endcsname{0.4148936170212766}
\expandafter\def\csname ATRClaim@cross.qwen.joint_common_guided@ciHigh\endcsname{0.6170212765957447}
\expandafter\def\csname ATRClaim@cross.qwen.joint_common_guided@diagnostics\endcsname{\{"failure\_stage":\{"none":94\},"n\_context\_ineligible":0,"n\_parse\_failed":0,"n\_transport\_or\_harness\_failed":0,"n\_truncated\_at\_cap":0\}}
\expandafter\def\csname ATRClaim@cross.qwen.joint_common_guided@unit\endcsname{null}
\expandafter\def\csname ATRClaim@cross.qwen.joint_common_guided@scope\endcsname{joint\_common\_context\_eligible\_sensitivity}
\expandafter\def\csname ATRClaim@cross.qwen.joint_common_guided@diagnostic@failure_stage\endcsname{\{"none":94\}}
\expandafter\def\csname ATRClaim@cross.qwen.joint_common_guided@diagnostic@n_context_ineligible\endcsname{0}
\expandafter\def\csname ATRClaim@cross.qwen.joint_common_guided@diagnostic@n_parse_failed\endcsname{0}
\expandafter\def\csname ATRClaim@cross.qwen.joint_common_guided@diagnostic@n_transport_or_harness_failed\endcsname{0}
\expandafter\def\csname ATRClaim@cross.qwen.joint_common_guided@diagnostic@n_truncated_at_cap\endcsname{0}
\expandafter\def\csname ATRClaim@cross.qwen.matched_budgeted@value\endcsname{0.9195402298850575}
\expandafter\def\csname ATRClaim@cross.qwen.matched_budgeted@n\endcsname{174}
\expandafter\def\csname ATRClaim@cross.qwen.matched_budgeted@windows\endcsname{174}
\expandafter\def\csname ATRClaim@cross.qwen.matched_budgeted@ciLow\endcsname{0.8793103448275862}
\expandafter\def\csname ATRClaim@cross.qwen.matched_budgeted@ciHigh\endcsname{0.9597701149425287}
\expandafter\def\csname ATRClaim@cross.qwen.matched_budgeted@diagnostics\endcsname{\{"failure\_stage":\{"none":174\},"n\_context\_ineligible":0,"n\_parse\_failed":0,"n\_transport\_or\_harness\_failed":0,"n\_truncated\_at\_cap":0\}}
\expandafter\def\csname ATRClaim@cross.qwen.matched_budgeted@unit\endcsname{null}
\expandafter\def\csname ATRClaim@cross.qwen.matched_budgeted@scope\endcsname{null}
\expandafter\def\csname ATRClaim@cross.qwen.matched_budgeted@diagnostic@failure_stage\endcsname{\{"none":174\}}
\expandafter\def\csname ATRClaim@cross.qwen.matched_budgeted@diagnostic@n_context_ineligible\endcsname{0}
\expandafter\def\csname ATRClaim@cross.qwen.matched_budgeted@diagnostic@n_parse_failed\endcsname{0}
\expandafter\def\csname ATRClaim@cross.qwen.matched_budgeted@diagnostic@n_transport_or_harness_failed\endcsname{0}
\expandafter\def\csname ATRClaim@cross.qwen.matched_budgeted@diagnostic@n_truncated_at_cap\endcsname{0}
\expandafter\def\csname ATRClaim@package_d.distinct_rmse.m1024@value\endcsname{0.03331503413584331}
\expandafter\def\csname ATRClaim@package_d.distinct_rmse.m1024@n\endcsname{160}
\expandafter\def\csname ATRClaim@package_d.distinct_rmse.m1024@windows\endcsname{null}
\expandafter\def\csname ATRClaim@package_d.distinct_rmse.m1024@ciLow\endcsname{0.02983077603330961}
\expandafter\def\csname ATRClaim@package_d.distinct_rmse.m1024@ciHigh\endcsname{0.036493042678079166}
\expandafter\def\csname ATRClaim@package_d.distinct_rmse.m1024@diagnostics\endcsname{\{\}}
\expandafter\def\csname ATRClaim@package_d.distinct_rmse.m1024@unit\endcsname{null}
\expandafter\def\csname ATRClaim@package_d.distinct_rmse.m1024@scope\endcsname{null}
\expandafter\def\csname ATRClaim@package_d.distinct_rmse.m2048@value\endcsname{0.02342693270790632}
\expandafter\def\csname ATRClaim@package_d.distinct_rmse.m2048@n\endcsname{160}
\expandafter\def\csname ATRClaim@package_d.distinct_rmse.m2048@windows\endcsname{null}
\expandafter\def\csname ATRClaim@package_d.distinct_rmse.m2048@ciLow\endcsname{0.02088585394380089}
\expandafter\def\csname ATRClaim@package_d.distinct_rmse.m2048@ciHigh\endcsname{0.02595197631026463}
\expandafter\def\csname ATRClaim@package_d.distinct_rmse.m2048@diagnostics\endcsname{\{\}}
\expandafter\def\csname ATRClaim@package_d.distinct_rmse.m2048@unit\endcsname{null}
\expandafter\def\csname ATRClaim@package_d.distinct_rmse.m2048@scope\endcsname{null}
\expandafter\def\csname ATRClaim@package_d.distinct_rmse.m256@value\endcsname{0.06465024502395066}
\expandafter\def\csname ATRClaim@package_d.distinct_rmse.m256@n\endcsname{240}
\expandafter\def\csname ATRClaim@package_d.distinct_rmse.m256@windows\endcsname{null}
\expandafter\def\csname ATRClaim@package_d.distinct_rmse.m256@ciLow\endcsname{0.05915879716326972}
\expandafter\def\csname ATRClaim@package_d.distinct_rmse.m256@ciHigh\endcsname{0.07018523610597084}
\expandafter\def\csname ATRClaim@package_d.distinct_rmse.m256@diagnostics\endcsname{\{\}}
\expandafter\def\csname ATRClaim@package_d.distinct_rmse.m256@unit\endcsname{null}
\expandafter\def\csname ATRClaim@package_d.distinct_rmse.m256@scope\endcsname{null}
\expandafter\def\csname ATRClaim@package_d.distinct_rmse.m4096@value\endcsname{0.01720513206692743}
\expandafter\def\csname ATRClaim@package_d.distinct_rmse.m4096@n\endcsname{120}
\expandafter\def\csname ATRClaim@package_d.distinct_rmse.m4096@windows\endcsname{null}
\expandafter\def\csname ATRClaim@package_d.distinct_rmse.m4096@ciLow\endcsname{0.015166496887359264}
\expandafter\def\csname ATRClaim@package_d.distinct_rmse.m4096@ciHigh\endcsname{0.01891330859231902}
\expandafter\def\csname ATRClaim@package_d.distinct_rmse.m4096@diagnostics\endcsname{\{\}}
\expandafter\def\csname ATRClaim@package_d.distinct_rmse.m4096@unit\endcsname{null}
\expandafter\def\csname ATRClaim@package_d.distinct_rmse.m4096@scope\endcsname{null}
\expandafter\def\csname ATRClaim@package_d.distinct_rmse.m512@value\endcsname{0.046010530511407605}
\expandafter\def\csname ATRClaim@package_d.distinct_rmse.m512@n\endcsname{200}
\expandafter\def\csname ATRClaim@package_d.distinct_rmse.m512@windows\endcsname{null}
\expandafter\def\csname ATRClaim@package_d.distinct_rmse.m512@ciLow\endcsname{0.042361902576289294}
\expandafter\def\csname ATRClaim@package_d.distinct_rmse.m512@ciHigh\endcsname{0.0497927922371584}
\expandafter\def\csname ATRClaim@package_d.distinct_rmse.m512@diagnostics\endcsname{\{\}}
\expandafter\def\csname ATRClaim@package_d.distinct_rmse.m512@unit\endcsname{null}
\expandafter\def\csname ATRClaim@package_d.distinct_rmse.m512@scope\endcsname{null}
\expandafter\def\csname ATRClaim@package_d.distinct_rmse.m8192@value\endcsname{0.01096216952654259}
\expandafter\def\csname ATRClaim@package_d.distinct_rmse.m8192@n\endcsname{120}
\expandafter\def\csname ATRClaim@package_d.distinct_rmse.m8192@windows\endcsname{null}
\expandafter\def\csname ATRClaim@package_d.distinct_rmse.m8192@ciLow\endcsname{0.009474647412477014}
\expandafter\def\csname ATRClaim@package_d.distinct_rmse.m8192@ciHigh\endcsname{0.012367360013895145}
\expandafter\def\csname ATRClaim@package_d.distinct_rmse.m8192@diagnostics\endcsname{\{\}}
\expandafter\def\csname ATRClaim@package_d.distinct_rmse.m8192@unit\endcsname{null}
\expandafter\def\csname ATRClaim@package_d.distinct_rmse.m8192@scope\endcsname{null}
\expandafter\def\csname ATRClaim@package_d.m2048_utf8_crossover@value\endcsname{139.26288589691282}
\expandafter\def\csname ATRClaim@package_d.m2048_utf8_crossover@n\endcsname{20000}
\expandafter\def\csname ATRClaim@package_d.m2048_utf8_crossover@windows\endcsname{null}
\expandafter\def\csname ATRClaim@package_d.m2048_utf8_crossover@ciLow\endcsname{138.68620553051875}
\expandafter\def\csname ATRClaim@package_d.m2048_utf8_crossover@ciHigh\endcsname{139.86348286023554}
\expandafter\def\csname ATRClaim@package_d.m2048_utf8_crossover@diagnostics\endcsname{\{\}}
\expandafter\def\csname ATRClaim@package_d.m2048_utf8_crossover@unit\endcsname{null}
\expandafter\def\csname ATRClaim@package_d.m2048_utf8_crossover@scope\endcsname{null}
\expandafter\def\csname ATRClaim@scale.aggregate_only.accuracy@value\endcsname{0.5217939027462837}
\expandafter\def\csname ATRClaim@scale.aggregate_only.accuracy@n\endcsname{3969}
\expandafter\def\csname ATRClaim@scale.aggregate_only.accuracy@windows\endcsname{174}
\expandafter\def\csname ATRClaim@scale.aggregate_only.accuracy@ciLow\endcsname{0.5112669003505258}
\expandafter\def\csname ATRClaim@scale.aggregate_only.accuracy@ciHigh\endcsname{0.5319917440660474}
\expandafter\def\csname ATRClaim@scale.aggregate_only.accuracy@diagnostics\endcsname{\{"failure\_stage":\{"none":3969\},"n\_context\_ineligible":0,"n\_parse\_failed":0,"n\_transport\_or\_harness\_failed":0,"n\_truncated\_at\_cap":0\}}
\expandafter\def\csname ATRClaim@scale.aggregate_only.accuracy@unit\endcsname{null}
\expandafter\def\csname ATRClaim@scale.aggregate_only.accuracy@scope\endcsname{null}
\expandafter\def\csname ATRClaim@scale.aggregate_only.accuracy@diagnostic@failure_stage\endcsname{\{"none":3969\}}
\expandafter\def\csname ATRClaim@scale.aggregate_only.accuracy@diagnostic@n_context_ineligible\endcsname{0}
\expandafter\def\csname ATRClaim@scale.aggregate_only.accuracy@diagnostic@n_parse_failed\endcsname{0}
\expandafter\def\csname ATRClaim@scale.aggregate_only.accuracy@diagnostic@n_transport_or_harness_failed\endcsname{0}
\expandafter\def\csname ATRClaim@scale.aggregate_only.accuracy@diagnostic@n_truncated_at_cap\endcsname{0}
\expandafter\def\csname ATRClaim@scale.budgeted.accuracy@value\endcsname{0.889896699420509}
\expandafter\def\csname ATRClaim@scale.budgeted.accuracy@n\endcsname{3969}
\expandafter\def\csname ATRClaim@scale.budgeted.accuracy@windows\endcsname{174}
\expandafter\def\csname ATRClaim@scale.budgeted.accuracy@ciLow\endcsname{0.878735913767761}
\expandafter\def\csname ATRClaim@scale.budgeted.accuracy@ciHigh\endcsname{0.900851276915373}
\expandafter\def\csname ATRClaim@scale.budgeted.accuracy@diagnostics\endcsname{\{"failure\_stage":\{"none":3969\},"n\_context\_ineligible":0,"n\_parse\_failed":0,"n\_transport\_or\_harness\_failed":0,"n\_truncated\_at\_cap":0\}}
\expandafter\def\csname ATRClaim@scale.budgeted.accuracy@unit\endcsname{null}
\expandafter\def\csname ATRClaim@scale.budgeted.accuracy@scope\endcsname{null}
\expandafter\def\csname ATRClaim@scale.budgeted.accuracy@diagnostic@failure_stage\endcsname{\{"none":3969\}}
\expandafter\def\csname ATRClaim@scale.budgeted.accuracy@diagnostic@n_context_ineligible\endcsname{0}
\expandafter\def\csname ATRClaim@scale.budgeted.accuracy@diagnostic@n_parse_failed\endcsname{0}
\expandafter\def\csname ATRClaim@scale.budgeted.accuracy@diagnostic@n_transport_or_harness_failed\endcsname{0}
\expandafter\def\csname ATRClaim@scale.budgeted.accuracy@diagnostic@n_truncated_at_cap\endcsname{0}
\expandafter\def\csname ATRClaim@scale.budgeted_minus_aggregate_only@value\endcsname{0.36810279667422524}
\expandafter\def\csname ATRClaim@scale.budgeted_minus_aggregate_only@n\endcsname{3969}
\expandafter\def\csname ATRClaim@scale.budgeted_minus_aggregate_only@windows\endcsname{174}
\expandafter\def\csname ATRClaim@scale.budgeted_minus_aggregate_only@ciLow\endcsname{0.35139816877010643}
\expandafter\def\csname ATRClaim@scale.budgeted_minus_aggregate_only@ciHigh\endcsname{0.38481338481338484}
\expandafter\def\csname ATRClaim@scale.budgeted_minus_aggregate_only@diagnostics\endcsname{\{\}}
\expandafter\def\csname ATRClaim@scale.budgeted_minus_aggregate_only@unit\endcsname{null}
\expandafter\def\csname ATRClaim@scale.budgeted_minus_aggregate_only@scope\endcsname{null}
\expandafter\def\csname ATRClaim@scale.budgeted_minus_exact@value\endcsname{-0.040312421264802216}
\expandafter\def\csname ATRClaim@scale.budgeted_minus_exact@n\endcsname{3969}
\expandafter\def\csname ATRClaim@scale.budgeted_minus_exact@windows\endcsname{174}
\expandafter\def\csname ATRClaim@scale.budgeted_minus_exact@ciLow\endcsname{-0.04873599166015116}
\expandafter\def\csname ATRClaim@scale.budgeted_minus_exact@ciHigh\endcsname{-0.03237025147137507}
\expandafter\def\csname ATRClaim@scale.budgeted_minus_exact@diagnostics\endcsname{\{\}}
\expandafter\def\csname ATRClaim@scale.budgeted_minus_exact@unit\endcsname{null}
\expandafter\def\csname ATRClaim@scale.budgeted_minus_exact@scope\endcsname{null}
\expandafter\def\csname ATRClaim@scale.budgeted_minus_policy_only@value\endcsname{0.36407155454774504}
\expandafter\def\csname ATRClaim@scale.budgeted_minus_policy_only@n\endcsname{3969}
\expandafter\def\csname ATRClaim@scale.budgeted_minus_policy_only@windows\endcsname{174}
\expandafter\def\csname ATRClaim@scale.budgeted_minus_policy_only@ciLow\endcsname{0.3467399450412191}
\expandafter\def\csname ATRClaim@scale.budgeted_minus_policy_only@ciHigh\endcsname{0.3813041263372389}
\expandafter\def\csname ATRClaim@scale.budgeted_minus_policy_only@diagnostics\endcsname{\{\}}
\expandafter\def\csname ATRClaim@scale.budgeted_minus_policy_only@unit\endcsname{null}
\expandafter\def\csname ATRClaim@scale.budgeted_minus_policy_only@scope\endcsname{null}
\expandafter\def\csname ATRClaim@scale.counterfactual.accuracy@value\endcsname{0.9281934996220711}
\expandafter\def\csname ATRClaim@scale.counterfactual.accuracy@n\endcsname{3969}
\expandafter\def\csname ATRClaim@scale.counterfactual.accuracy@windows\endcsname{174}
\expandafter\def\csname ATRClaim@scale.counterfactual.accuracy@ciLow\endcsname{0.9194373401534527}
\expandafter\def\csname ATRClaim@scale.counterfactual.accuracy@ciHigh\endcsname{0.9367432666172474}
\expandafter\def\csname ATRClaim@scale.counterfactual.accuracy@diagnostics\endcsname{\{"failure\_stage":\{"none":3969\},"n\_context\_ineligible":0,"n\_parse\_failed":0,"n\_transport\_or\_harness\_failed":0,"n\_truncated\_at\_cap":0\}}
\expandafter\def\csname ATRClaim@scale.counterfactual.accuracy@unit\endcsname{null}
\expandafter\def\csname ATRClaim@scale.counterfactual.accuracy@scope\endcsname{appendix\_mechanism\_diagnostic}
\expandafter\def\csname ATRClaim@scale.counterfactual.accuracy@diagnostic@failure_stage\endcsname{\{"none":3969\}}
\expandafter\def\csname ATRClaim@scale.counterfactual.accuracy@diagnostic@n_context_ineligible\endcsname{0}
\expandafter\def\csname ATRClaim@scale.counterfactual.accuracy@diagnostic@n_parse_failed\endcsname{0}
\expandafter\def\csname ATRClaim@scale.counterfactual.accuracy@diagnostic@n_transport_or_harness_failed\endcsname{0}
\expandafter\def\csname ATRClaim@scale.counterfactual.accuracy@diagnostic@n_truncated_at_cap\endcsname{0}
\expandafter\def\csname ATRClaim@scale.exact.accuracy@value\endcsname{0.9302091206853111}
\expandafter\def\csname ATRClaim@scale.exact.accuracy@n\endcsname{3969}
\expandafter\def\csname ATRClaim@scale.exact.accuracy@windows\endcsname{174}
\expandafter\def\csname ATRClaim@scale.exact.accuracy@ciLow\endcsname{0.9222526787939198}
\expandafter\def\csname ATRClaim@scale.exact.accuracy@ciHigh\endcsname{0.9383059418457649}
\expandafter\def\csname ATRClaim@scale.exact.accuracy@diagnostics\endcsname{\{"failure\_stage":\{"none":3969\},"n\_context\_ineligible":0,"n\_parse\_failed":0,"n\_transport\_or\_harness\_failed":0,"n\_truncated\_at\_cap":0\}}
\expandafter\def\csname ATRClaim@scale.exact.accuracy@unit\endcsname{null}
\expandafter\def\csname ATRClaim@scale.exact.accuracy@scope\endcsname{null}
\expandafter\def\csname ATRClaim@scale.exact.accuracy@diagnostic@failure_stage\endcsname{\{"none":3969\}}
\expandafter\def\csname ATRClaim@scale.exact.accuracy@diagnostic@n_context_ineligible\endcsname{0}
\expandafter\def\csname ATRClaim@scale.exact.accuracy@diagnostic@n_parse_failed\endcsname{0}
\expandafter\def\csname ATRClaim@scale.exact.accuracy@diagnostic@n_transport_or_harness_failed\endcsname{0}
\expandafter\def\csname ATRClaim@scale.exact.accuracy@diagnostic@n_truncated_at_cap\endcsname{0}
\expandafter\def\csname ATRClaim@scale.policy_only.accuracy@value\endcsname{0.5258251448727639}
\expandafter\def\csname ATRClaim@scale.policy_only.accuracy@n\endcsname{3969}
\expandafter\def\csname ATRClaim@scale.policy_only.accuracy@windows\endcsname{174}
\expandafter\def\csname ATRClaim@scale.policy_only.accuracy@ciLow\endcsname{0.515869140625}
\expandafter\def\csname ATRClaim@scale.policy_only.accuracy@ciHigh\endcsname{0.5355907068709836}
\expandafter\def\csname ATRClaim@scale.policy_only.accuracy@diagnostics\endcsname{\{"failure\_stage":\{"none":3969\},"n\_context\_ineligible":0,"n\_parse\_failed":0,"n\_transport\_or\_harness\_failed":0,"n\_truncated\_at\_cap":0\}}
\expandafter\def\csname ATRClaim@scale.policy_only.accuracy@unit\endcsname{null}
\expandafter\def\csname ATRClaim@scale.policy_only.accuracy@scope\endcsname{null}
\expandafter\def\csname ATRClaim@scale.policy_only.accuracy@diagnostic@failure_stage\endcsname{\{"none":3969\}}
\expandafter\def\csname ATRClaim@scale.policy_only.accuracy@diagnostic@n_context_ineligible\endcsname{0}
\expandafter\def\csname ATRClaim@scale.policy_only.accuracy@diagnostic@n_parse_failed\endcsname{0}
\expandafter\def\csname ATRClaim@scale.policy_only.accuracy@diagnostic@n_transport_or_harness_failed\endcsname{0}
\expandafter\def\csname ATRClaim@scale.policy_only.accuracy@diagnostic@n_truncated_at_cap\endcsname{0}
\expandafter\def\csname ATRClaim@tool.test.failure_as_zero_accuracy@value\endcsname{0.7909}
\expandafter\def\csname ATRClaim@tool.test.failure_as_zero_accuracy@n\endcsname{110}
\expandafter\def\csname ATRClaim@tool.test.failure_as_zero_accuracy@windows\endcsname{null}
\expandafter\def\csname ATRClaim@tool.test.failure_as_zero_accuracy@ciLow\endcsname{null}
\expandafter\def\csname ATRClaim@tool.test.failure_as_zero_accuracy@ciHigh\endcsname{null}
\expandafter\def\csname ATRClaim@tool.test.failure_as_zero_accuracy@diagnostics\endcsname{\{"n\_completed":109,"n\_correct":87,"n\_exec\_failures":0,"n\_unsuccessful":1\}}
\expandafter\def\csname ATRClaim@tool.test.failure_as_zero_accuracy@unit\endcsname{null}
\expandafter\def\csname ATRClaim@tool.test.failure_as_zero_accuracy@scope\endcsname{null}
\expandafter\def\csname ATRClaim@tool.test.failure_as_zero_accuracy@diagnostic@n_completed\endcsname{109}
\expandafter\def\csname ATRClaim@tool.test.failure_as_zero_accuracy@diagnostic@n_correct\endcsname{87}
\expandafter\def\csname ATRClaim@tool.test.failure_as_zero_accuracy@diagnostic@n_exec_failures\endcsname{0}
\expandafter\def\csname ATRClaim@tool.test.failure_as_zero_accuracy@diagnostic@n_unsuccessful\endcsname{1}
\expandafter\def\csname ATRClaim@umbc.equalbyte.200k.hll@value\endcsname{0.014884951539691223}
\expandafter\def\csname ATRClaim@umbc.equalbyte.200k.hll@n\endcsname{100}
\expandafter\def\csname ATRClaim@umbc.equalbyte.200k.hll@windows\endcsname{null}
\expandafter\def\csname ATRClaim@umbc.equalbyte.200k.hll@ciLow\endcsname{0.011952178391387902}
\expandafter\def\csname ATRClaim@umbc.equalbyte.200k.hll@ciHigh\endcsname{0.01620597284352431}
\expandafter\def\csname ATRClaim@umbc.equalbyte.200k.hll@diagnostics\endcsname{\{\}}
\expandafter\def\csname ATRClaim@umbc.equalbyte.200k.hll@unit\endcsname{null}
\expandafter\def\csname ATRClaim@umbc.equalbyte.200k.hll@scope\endcsname{null}
\expandafter\def\csname ATRClaim@umbc.equalbyte.200k.official_umbc@value\endcsname{0.26946666493075716}
\expandafter\def\csname ATRClaim@umbc.equalbyte.200k.official_umbc@n\endcsname{100}
\expandafter\def\csname ATRClaim@umbc.equalbyte.200k.official_umbc@windows\endcsname{null}
\expandafter\def\csname ATRClaim@umbc.equalbyte.200k.official_umbc@ciLow\endcsname{0.22757437763743754}
\expandafter\def\csname ATRClaim@umbc.equalbyte.200k.official_umbc@ciHigh\endcsname{0.3345621455274712}
\expandafter\def\csname ATRClaim@umbc.equalbyte.200k.official_umbc@diagnostics\endcsname{\{\}}
\expandafter\def\csname ATRClaim@umbc.equalbyte.200k.official_umbc@unit\endcsname{null}
\expandafter\def\csname ATRClaim@umbc.equalbyte.200k.official_umbc@scope\endcsname{null}
\expandafter\def\csname ATRClaim@umbc.equalbyte.200k.paired@value\endcsname{0.25488289356874294}
\expandafter\def\csname ATRClaim@umbc.equalbyte.200k.paired@n\endcsname{100}
\expandafter\def\csname ATRClaim@umbc.equalbyte.200k.paired@windows\endcsname{null}
\expandafter\def\csname ATRClaim@umbc.equalbyte.200k.paired@ciLow\endcsname{0.20858350708653664}
\expandafter\def\csname ATRClaim@umbc.equalbyte.200k.paired@ciHigh\endcsname{0.32372132847008234}
\expandafter\def\csname ATRClaim@umbc.equalbyte.200k.paired@diagnostics\endcsname{\{\}}
\expandafter\def\csname ATRClaim@umbc.equalbyte.200k.paired@unit\endcsname{null}
\expandafter\def\csname ATRClaim@umbc.equalbyte.200k.paired@scope\endcsname{null}
\expandafter\def\csname ATRClaim@umbc.equalbyte.50k.hll@value\endcsname{0.014865555525982559}
\expandafter\def\csname ATRClaim@umbc.equalbyte.50k.hll@n\endcsname{100}
\expandafter\def\csname ATRClaim@umbc.equalbyte.50k.hll@windows\endcsname{null}
\expandafter\def\csname ATRClaim@umbc.equalbyte.50k.hll@ciLow\endcsname{0.012267956610292495}
\expandafter\def\csname ATRClaim@umbc.equalbyte.50k.hll@ciHigh\endcsname{0.015954192438634753}
\expandafter\def\csname ATRClaim@umbc.equalbyte.50k.hll@diagnostics\endcsname{\{\}}
\expandafter\def\csname ATRClaim@umbc.equalbyte.50k.hll@unit\endcsname{null}
\expandafter\def\csname ATRClaim@umbc.equalbyte.50k.hll@scope\endcsname{null}
\expandafter\def\csname ATRClaim@umbc.equalbyte.50k.official_umbc@value\endcsname{0.2677574081333674}
\expandafter\def\csname ATRClaim@umbc.equalbyte.50k.official_umbc@n\endcsname{100}
\expandafter\def\csname ATRClaim@umbc.equalbyte.50k.official_umbc@windows\endcsname{null}
\expandafter\def\csname ATRClaim@umbc.equalbyte.50k.official_umbc@ciLow\endcsname{0.21319349376260607}
\expandafter\def\csname ATRClaim@umbc.equalbyte.50k.official_umbc@ciHigh\endcsname{0.3345869748116911}
\expandafter\def\csname ATRClaim@umbc.equalbyte.50k.official_umbc@diagnostics\endcsname{\{\}}
\expandafter\def\csname ATRClaim@umbc.equalbyte.50k.official_umbc@unit\endcsname{null}
\expandafter\def\csname ATRClaim@umbc.equalbyte.50k.official_umbc@scope\endcsname{null}
\expandafter\def\csname ATRClaim@umbc.equalbyte.50k.paired@value\endcsname{0.2505349294856195}
\expandafter\def\csname ATRClaim@umbc.equalbyte.50k.paired@n\endcsname{100}
\expandafter\def\csname ATRClaim@umbc.equalbyte.50k.paired@windows\endcsname{null}
\expandafter\def\csname ATRClaim@umbc.equalbyte.50k.paired@ciLow\endcsname{0.19426897670123172}
\expandafter\def\csname ATRClaim@umbc.equalbyte.50k.paired@ciHigh\endcsname{0.30925462181815966}
\expandafter\def\csname ATRClaim@umbc.equalbyte.50k.paired@diagnostics\endcsname{\{\}}
\expandafter\def\csname ATRClaim@umbc.equalbyte.50k.paired@unit\endcsname{null}
\expandafter\def\csname ATRClaim@umbc.equalbyte.50k.paired@scope\endcsname{null}
\expandafter\def\csname ATRClaim@umbc.equalbyte.8k.hll@value\endcsname{0.01292157688612857}
\expandafter\def\csname ATRClaim@umbc.equalbyte.8k.hll@n\endcsname{100}
\expandafter\def\csname ATRClaim@umbc.equalbyte.8k.hll@windows\endcsname{null}
\expandafter\def\csname ATRClaim@umbc.equalbyte.8k.hll@ciLow\endcsname{0.010842649666683649}
\expandafter\def\csname ATRClaim@umbc.equalbyte.8k.hll@ciHigh\endcsname{0.015436756256424966}
\expandafter\def\csname ATRClaim@umbc.equalbyte.8k.hll@diagnostics\endcsname{\{\}}
\expandafter\def\csname ATRClaim@umbc.equalbyte.8k.hll@unit\endcsname{null}
\expandafter\def\csname ATRClaim@umbc.equalbyte.8k.hll@scope\endcsname{null}
\expandafter\def\csname ATRClaim@umbc.equalbyte.8k.official_umbc@value\endcsname{0.27599702946175064}
\expandafter\def\csname ATRClaim@umbc.equalbyte.8k.official_umbc@n\endcsname{100}
\expandafter\def\csname ATRClaim@umbc.equalbyte.8k.official_umbc@windows\endcsname{null}
\expandafter\def\csname ATRClaim@umbc.equalbyte.8k.official_umbc@ciLow\endcsname{0.24362062609312238}
\expandafter\def\csname ATRClaim@umbc.equalbyte.8k.official_umbc@ciHigh\endcsname{0.33803882039612076}
\expandafter\def\csname ATRClaim@umbc.equalbyte.8k.official_umbc@diagnostics\endcsname{\{\}}
\expandafter\def\csname ATRClaim@umbc.equalbyte.8k.official_umbc@unit\endcsname{null}
\expandafter\def\csname ATRClaim@umbc.equalbyte.8k.official_umbc@scope\endcsname{null}
\expandafter\def\csname ATRClaim@umbc.equalbyte.8k.paired@value\endcsname{0.26689668662792965}
\expandafter\def\csname ATRClaim@umbc.equalbyte.8k.paired@n\endcsname{100}
\expandafter\def\csname ATRClaim@umbc.equalbyte.8k.paired@windows\endcsname{null}
\expandafter\def\csname ATRClaim@umbc.equalbyte.8k.paired@ciLow\endcsname{0.2235347733114689}
\expandafter\def\csname ATRClaim@umbc.equalbyte.8k.paired@ciHigh\endcsname{0.32414264699345086}
\expandafter\def\csname ATRClaim@umbc.equalbyte.8k.paired@diagnostics\endcsname{\{\}}
\expandafter\def\csname ATRClaim@umbc.equalbyte.8k.paired@unit\endcsname{null}
\expandafter\def\csname ATRClaim@umbc.equalbyte.8k.paired@scope\endcsname{null}
\expandafter\def\csname ATRClaim@umbc.resource.hll_resident_state_bytes@value\endcsname{2048}
\expandafter\def\csname ATRClaim@umbc.resource.hll_resident_state_bytes@n\endcsname{null}
\expandafter\def\csname ATRClaim@umbc.resource.hll_resident_state_bytes@windows\endcsname{null}
\expandafter\def\csname ATRClaim@umbc.resource.hll_resident_state_bytes@ciLow\endcsname{null}
\expandafter\def\csname ATRClaim@umbc.resource.hll_resident_state_bytes@ciHigh\endcsname{null}
\expandafter\def\csname ATRClaim@umbc.resource.hll_resident_state_bytes@diagnostics\endcsname{\{\}}
\expandafter\def\csname ATRClaim@umbc.resource.hll_resident_state_bytes@unit\endcsname{bytes}
\expandafter\def\csname ATRClaim@umbc.resource.hll_resident_state_bytes@scope\endcsname{null}
\expandafter\def\csname ATRClaim@umbc.resource.shared_frozen_feature_trunk_bytes@value\endcsname{8394880}
\expandafter\def\csname ATRClaim@umbc.resource.shared_frozen_feature_trunk_bytes@n\endcsname{null}
\expandafter\def\csname ATRClaim@umbc.resource.shared_frozen_feature_trunk_bytes@windows\endcsname{null}
\expandafter\def\csname ATRClaim@umbc.resource.shared_frozen_feature_trunk_bytes@ciLow\endcsname{null}
\expandafter\def\csname ATRClaim@umbc.resource.shared_frozen_feature_trunk_bytes@ciHigh\endcsname{null}
\expandafter\def\csname ATRClaim@umbc.resource.shared_frozen_feature_trunk_bytes@diagnostics\endcsname{\{\}}
\expandafter\def\csname ATRClaim@umbc.resource.shared_frozen_feature_trunk_bytes@unit\endcsname{bytes}
\expandafter\def\csname ATRClaim@umbc.resource.shared_frozen_feature_trunk_bytes@scope\endcsname{null}
\expandafter\def\csname ATRClaim@umbc.resource.umbc_carried_state_bytes@value\endcsname{2048}
\expandafter\def\csname ATRClaim@umbc.resource.umbc_carried_state_bytes@n\endcsname{null}
\expandafter\def\csname ATRClaim@umbc.resource.umbc_carried_state_bytes@windows\endcsname{null}
\expandafter\def\csname ATRClaim@umbc.resource.umbc_carried_state_bytes@ciLow\endcsname{null}
\expandafter\def\csname ATRClaim@umbc.resource.umbc_carried_state_bytes@ciHigh\endcsname{null}
\expandafter\def\csname ATRClaim@umbc.resource.umbc_carried_state_bytes@diagnostics\endcsname{\{\}}
\expandafter\def\csname ATRClaim@umbc.resource.umbc_carried_state_bytes@unit\endcsname{bytes}
\expandafter\def\csname ATRClaim@umbc.resource.umbc_carried_state_bytes@scope\endcsname{null}
\expandafter\def\csname ATRClaim@umbc.resource.umbc_static_parameter_bytes@value\endcsname{66868}
\expandafter\def\csname ATRClaim@umbc.resource.umbc_static_parameter_bytes@n\endcsname{null}
\expandafter\def\csname ATRClaim@umbc.resource.umbc_static_parameter_bytes@windows\endcsname{null}
\expandafter\def\csname ATRClaim@umbc.resource.umbc_static_parameter_bytes@ciLow\endcsname{null}
\expandafter\def\csname ATRClaim@umbc.resource.umbc_static_parameter_bytes@ciHigh\endcsname{null}
\expandafter\def\csname ATRClaim@umbc.resource.umbc_static_parameter_bytes@diagnostics\endcsname{\{\}}
\expandafter\def\csname ATRClaim@umbc.resource.umbc_static_parameter_bytes@unit\endcsname{bytes}
\expandafter\def\csname ATRClaim@umbc.resource.umbc_static_parameter_bytes@scope\endcsname{null}

\providecommand{\GemmaScaleN}{3969}
\providecommand{\GemmaScaleWindows}{174}
\providecommand{\GemmaScaleBudgetedValue}{0.9916855631141346}
\providecommand{\GemmaScaleBudgetedLow}{0.9872148408122337}
\providecommand{\GemmaScaleBudgetedHigh}{0.9952404809619239}
\providecommand{\GemmaScaleExactValue}{1.0}
\providecommand{\GemmaScaleCounterfactualValue}{1.0}
\providecommand{\GemmaScaleAggregateOnlyValue}{0.5137314184933233}
\providecommand{\GemmaScaleAggregateOnlyLow}{0.5010368066355625}
\providecommand{\GemmaScaleAggregateOnlyHigh}{0.5257678292555961}
\providecommand{\GemmaScalePolicyOnlyValue}{0.27286470143613}
\providecommand{\GemmaScalePolicyOnlyLow}{0.249686952166291}
\providecommand{\GemmaScalePolicyOnlyHigh}{0.2972404730617608}
\providecommand{\GemmaScaleBudgetedMinusExactValue}{-0.008314436885865457}
\providecommand{\GemmaScaleBudgetedMinusExactLow}{-0.012785159187766357}
\providecommand{\GemmaScaleBudgetedMinusExactHigh}{-0.004759519038076153}

\providecommand{\QwenCodeAccuracy}{1.0}
\providecommand{\QwenCodeMinusBudgeted}{0.08916666666666667}
\providecommand{\QwenCodeMinusBudgetedLow}{0.07142857142857142}
\providecommand{\QwenCodeMinusBudgetedHigh}{0.1071435932163467}
\providecommand{\QwenCodeCallsPerTask}{1.01}
\providecommand{\QwenCodeOutputTokensPerTask}{1260.2016666666666}

\providecommand{\GemmaCodeAccuracy}{1.0}
\providecommand{\GemmaCodeMinusBudgeted}{0.0075}
\providecommand{\GemmaCodeMinusBudgetedLow}{0.002514668901927913}
\providecommand{\GemmaCodeMinusBudgetedHigh}{0.013289036544850499}
\providecommand{\GemmaCodeCallsPerTask}{1.0}
\providecommand{\GemmaCodeOutputTokensPerTask}{208.5575}

\providecommand{\MillionRecordMeanRelativeErrorPct}{1.5677633442786227}
\providecommand{\MillionRecordEstimateCount}{20}

\providecommand{\ATRPct}[1]{\fpeval{round(100*(#1),2)}\%}
\providecommand{\ATRRoundTwo}[1]{\fpeval{round(#1,2)}}
\providecommand{\ATRPctClaim}[1]{\ATRPct{\ATRClaimValue{#1}}}
\providecommand{\ATRPctCI}[1]{%
  \ATRPct{\ATRClaimValue{#1}}
  [\ATRPct{\ATRClaimCILow{#1}}, \ATRPct{\ATRClaimCIHigh{#1}}]}
\appendix

\section{Normalized Attention and the Aggregation-Operator Mismatch}
\label{app:proofs}

In the paper, we stated that ``a normalized readout does not
itself expose the update and merge rules needed for set-based aggregation.''
This statement concerns the state returned by the readout, not the computational power
of a Transformer, an external executor, or a model supplied with an additional
aggregation state.

To further justify this, consider the records in one stream $\mathcal R^{(i)}$. Let $v_t$ denote
the model representation associated with record $t$, and let $a_t$ be its
attention score at the readout position. A normalized attention readout can be
written as
\begin{equation}
o
=
\frac{N}{D},
\qquad
N=\sum_{t=1}^{n_i}e^{a_t}v_t,
\qquad
D=\sum_{t=1}^{n_i}e^{a_t}.
\label{eq:app_normalized_readout}
\end{equation}
The notations in this section are local to the analysis; they are not additional
components of our method.

\begin{observation}[Effect of a repeated record]
\label{prop:duplicate_effect}
Suppose another record for an identity already present in
$\mathcal R^{(i)}$ enters the readout with value $v$ and positive weight
$w=e^a$. If $o'$ is the updated output, then
\begin{equation}
o'-o
=
\frac{w}{D+w}(v-o).
\label{eq:app_duplicate_effect}
\end{equation}
The output is unchanged if and only if $v=o$.
\end{observation}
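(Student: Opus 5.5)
The identity follows from direct algebra on the normalized readout \eqref{eq:app_normalized_readout}, so the plan is to compute $o'$ explicitly and subtract.

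First, write the updated numerator and denominator after the repeated record enters: $N'=N+wv$ and $D'=D+w$. This uses only the fact that the readout sums over all records in the stream, with no special handling for identities. The new output is then $o'=(N+wv)/(D+w)$. Next, substitute $N=oD$, which holds because $D>0$: it is a sum of exponentials over a nonempty stream. This gives
\begin{equation*}
o'-o=\frac{oD+wv-o(D+w)}{D+w}=\frac{w(v-o)}{D+w},
\end{equation*}
which is \eqref{eq:app_duplicate_effect}.

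For the ``if and only if'' claim, I will use that $w=e^a>0$ and $D+w>0$. Hence the scalar factor $w/(D+w)$ lies in $(0,1)$ and is nonzero. A nonzero scalar times the vector $v-o$ vanishes exactly when $v-o=0$, so $o'=o$ if and only if $v=o$.

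There is no real obstacle here. The only point needing care is to state explicitly that $D>0$, which requires the stream to be nonempty before the repeat. This holds because the identity is assumed to be already present in $\mathcal R^{(i)}$. I may also remark that the repeated record's representation $v$ need not equal the earlier one, and that the conclusion does not depend on this. The interpretive consequence is that the output stays unchanged only in the degenerate case where $v$ coincides with the current average. In contrast, the support-based HLL update \eqref{eq:hll_update} is idempotent under repeats.
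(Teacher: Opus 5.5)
Your proposal is correct and follows the paper's proof essentially line for line: write $o'=(N+wv)/(D+w)$, substitute $N=Do$, and use $w>0$ to obtain the equivalence $o'=o \iff v=o$. Your explicit remark that $D>0$ because the identity is already present in the stream is a small point the paper leaves implicit.
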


\begin{proof}
The new output is $o'=(N+wv)/(D+w)$. Subtracting $N/D$ and using
$N=Do$ gives Equation~\eqref{eq:app_duplicate_effect}. Since $w>0$,
$o'=o$ holds exactly when $v=o$.
\end{proof}

Set-based aggregation requires a different rule. Once an identity has entered
the set, seeing it again must leave the represented set unchanged. Normalized
attention has no such guarantee: the second record remains another term in the
average. This loss of cardinality information in attention-based aggregation
has also been studied in graph neural networks~\citep{zhang2020improving}.

\begin{observation}[Normalized outputs do not determine a merged output]
\label{prop:no_normalized_merge}
No binary function can recover the normalized output of every concatenated
pair of record segments from their two normalized outputs alone.
\end{observation}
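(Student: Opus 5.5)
The argument is a counterexample. Two pairs of segments will have identical normalized outputs, but their concatenations will have different normalized outputs. Any binary function $\Phi$ with $\Phi(o_A,o_B)=o_{A\|B}$ for all segment pairs would then have to return two different values on the same input, which is impossible.

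For a segment $S$, write its readout in the form of Equation~\eqref{eq:app_normalized_readout}: $o_S=N_S/D_S$. Concatenation simply adds numerators and denominators, so
\begin{equation*}
o_{A\|B}=\frac{N_A+N_B}{D_A+D_B}=\frac{D_A}{D_A+D_B}\,o_A+\frac{D_B}{D_A+D_B}\,o_B .
\end{equation*}
The merged output is therefore a convex combination of $o_A$ and $o_B$. Its weights depend on the masses $D_A$ and $D_B$, and these are discarded by normalization. The construction exploits exactly this lost information.

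The concrete choice uses scalar values; embedding them in any vector space works the same way.

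\begin{itemize}
\item The first pair is $A$, a single record with $v=0$ and $a=0$, and $B$, a single record with $v=1$ and $a=0$. Then $o_A=0$, $o_B=1$, and $o_{A\|B}=1/2$.
\item The second pair is $A'$, two records each with $v=0$ and $a=0$, together with the same $B$. Then $o_{A'}=0=o_A$ and $o_B=1$, but $o_{A'\|B}=1/3\neq 1/2$.
\end{itemize}

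Since $(o_{A'},o_B)=(o_A,o_B)$, a function $\Phi$ would need $\Phi(0,1)$ to equal both $1/2$ and $1/3$, a contradiction.

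In the set-aggregation reading, $A'$ may be the same identity repeated. This ties the counterexample to Observation~\ref{prop:duplicate_effect}: duplication leaves the segment's own output unchanged (there $v=o$), yet it still alters how the segment merges with others. Alternatively, one can keep single records and set $a=\log 2$ for $A'$; this varies only the attention mass, which shows the failure is not specific to duplicates.

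There is no substantial obstacle. The only point needing care is scope. The claim concerns functions of the two normalized outputs alone, and it does not rule out merging when $D$ (or the pair $N,D$) is retained. I will state this explicitly, since retaining the unnormalized pair is exactly what an explicit mergeable state provides.
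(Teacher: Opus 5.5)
Your proof is correct and follows the same route as the paper. Both use equal attention scores and exhibit two segments with the same normalized output but different mass; each is concatenated with a common segment, so a merge function would have to return two different values on the same input pair. The paper uses $(0,2)$ versus $(1)$ concatenated with $(0)$, giving $2/3$ versus $1/2$; your $(0)$ versus $(0,0)$ concatenated with $(1)$ is the same construction. Your convex-combination identity usefully makes explicit that the discarded denominators $D_A, D_B$ are the missing information.
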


\begin{proof}
Equal attention scores reduce Equation~\eqref{eq:app_normalized_readout} to an
arithmetic mean. Take the scalar segments
\[
\mathcal R_A=(0,2),
\qquad
\mathcal R_B=(1),
\qquad
\mathcal T=(0).
\]
Both $\mathcal R_A$ and $\mathcal R_B$ have mean $1$, while $\mathcal T$ has
mean $0$. Their concatenations do not agree:
\[
o(\mathcal R_A\Vert\mathcal T)=\frac{2}{3},
\qquad
o(\mathcal R_B\Vert\mathcal T)=\frac{1}{2}.
\]
A merge function receives the same input pair $(1,0)$ in both cases and
therefore cannot return both answers.
\end{proof}

Based on the above observations, keeping $(N,D)$ would make the average mergeable because the numerators and
denominators can be added. It would not make the update duplicate-idempotent:
an identity present in two segments would still contribute twice. The result
would be a mergeable weighted average, not set union. We use
\emph{aggregation-operator mismatch} for this difference between the
update-and-merge algebra of normalized averaging and that required by the
target aggregation.

\paragraph{Scope of the theoretical contribution.}
The two observations above and Proposition~\ref{prop:hll_segment_composition}
are the complete set of new theoretical statements made in this paper. Their
assumptions, restrictions, and proofs are stated in this appendix. The
cardinality estimator, Count--Min bound, and JMLE likelihood used later are
established tools and are cited rather than presented as new theorems.

\section{Notation and Exact Targets}
\label{app:operator_details}

\subsection{Notation shared with the main paper}

Table~\ref{tab:app_notation} repeats the notation needed to read the
appendix. Notations introduced only inside a proof or estimator are not included and are defined
where they first appear.

\begin{table}[ht]
\centering
\small
\caption{Notation used in the main paper and appendix.}
\label{tab:app_notation}
\begin{tabular}{L{0.25\textwidth}L{0.67\textwidth}}
\toprule
Notation & Meaning \\
\midrule
$x_{1:L}$, $q$ & context of length $L$ and the request over that context \\
$E$ & extractor that organizes relevant records into streams \\
$\tau$ & number of streams required by request $q$ \\
$\mathcal R^{(i)}$ & stream $i$, containing $n_i$ records \\
$z_t^{(i)}$ & canonical identity of record $t$ in stream $i$ \\
$g_t^{(i)}$ & optional group key of that record \\
$\mathcal U$, $\mathcal G$ & identity universe and group-key universe \\
$s_i\in\mathcal S$ & maintained state for stream $i$ and its state space \\
$f_q$, $y_q$ & exact aggregation function and exact answer \\
$\rho_q$, $\widehat y_q$ & request-specific readout and its estimate \\
$m$ & number of one-byte registers in one HLL state \\
$P_i$ & number of separately processed segments of stream $i$ \\
$d_{\mathrm{cm}}$, $w_{\mathrm{cm}}$ & Count--Min rows and counters per row \\
\bottomrule
\end{tabular}
\end{table}

\subsection{Target aggregations and empty-set conventions}

For each stream, define the set of observed identities and the
identity set within group $g$ as
\begin{equation}
A_i
=
\{z_t^{(i)}:1\leq t\leq n_i\},
\qquad
A_{i,g}
=
\{z_t^{(i)}:g_t^{(i)}=g,\ 1\leq t\leq n_i\}.
\label{eq:app_identity_sets}
\end{equation}
The number of occurrences of identity $z$ in the same stream is
\begin{equation}
\mu_i(z)
=
\bigl|\{t:z_t^{(i)}=z,\ 1\leq t\leq n_i\}\bigr|.
\label{eq:app_multiplicity}
\end{equation}

The exact targets used in this paper follow directly:
\begin{align}
f_{\mathrm{distinct}}(\mathcal R^{(i)})
&=|A_i|, \nonumber\\
f_{\mathrm{union}}(\mathcal R^{(1)},\mathcal R^{(2)})
&=|A_1\cup A_2|, \nonumber\\
f_{\mathrm{Jaccard}}(\mathcal R^{(1)},\mathcal R^{(2)})
&=\frac{|A_1\cap A_2|}{|A_1\cup A_2|}, \nonumber\\
f_{\mathrm{contain}}(\mathcal R^{(1)},\mathcal R^{(2)})
&=\frac{|A_1\cap A_2|}{|A_1|}, \nonumber\\
f_{\mathrm{group}(i,g)}(\mathcal R^{(i)})
&=|A_{i,g}|, \nonumber\\
f_{\mathrm{freq}(i,z^\star)}(\mathcal R^{(i)})
&=\mu_i(z^\star).
\label{eq:app_exact_targets}
\end{align}
Distinct count and grouped distinct count are zero for an empty input or
group. We set Jaccard similarity to zero when $A_1\cup A_2$ is empty and
containment to zero when $A_1$ is empty. The exact targets and approximate
readouts use the same conventions.

\section{HLL State Construction and Readout}
\label{app:hll_details}

\subsection{Hashing and register updates}

One HLL state for stream $i$ is
\[
s_i=(M_{i,1},\ldots,M_{i,m}),
\]
with every register initialized to zero. The implementation hashes each
canonical identity to 64 bits using a fixed seed. For
$p=\log_2m$, $p$ hash bits select a register $j(z)$; the remaining bits
determine the rank $r(z)$, defined as one plus their number of leading zeros.
The finite 64-bit hash caps the rank at $64-p+1$. Processing identity $z$
changes one register:
\begin{equation}
M_{i,j(z)}
\leftarrow
\max\{M_{i,j(z)},r(z)\}.
\label{eq:app_hll_update}
\end{equation}
The original identity is not retained. Because a fixed hash and seed map every
occurrence of $z$ to the same pair $(j(z),r(z))$, applying
Equation~\eqref{eq:app_hll_update} again has no effect once the register has
reached that rank.

For a group-by request, $g_t^{(i)}$ first selects the state $s_{i,g_t^{(i)}}$.
The identity $z_t^{(i)}$ then performs the same one-register update. An
identity may therefore appear in several groups, while repeated occurrences
within one group remain idempotent.

\subsection{Segment composition}

Let $s_i^{[a]}=(M_{i,1}^{[a]},\ldots,M_{i,m}^{[a]})$ be the state built from
segment $a$ of stream $i$. States are compatible when they use the
same register count, hash function, and seed. Their merge is the registerwise
maximum:
\begin{equation}
s_i
=
\bigoplus_{a=1}^{P_i}s_i^{[a]},
\qquad
M_{i,j}
=
\max_{1\leq a\leq P_i}M_{i,j}^{[a]}.
\label{eq:app_hll_segment_merge}
\end{equation}

\begin{proposition}[Segment composition]
\label{prop:hll_segment_composition}
Equation~\eqref{eq:app_hll_segment_merge} produces exactly the same register
array as processing the complete stream once.
\end{proposition}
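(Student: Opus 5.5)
The plan is to give each register a closed form as a function of the set of identities it has seen, and then show that registerwise maximum over segments reproduces that closed form for the whole stream.

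Fix a stream $\mathcal R^{(i)}$, split into consecutive segments $\mathcal R^{(i),[1]},\ldots,\mathcal R^{(i),[P_i]}$ whose concatenation is the full stream. Let $B_a$ be the set of identities occurring in segment $a$, so that $A_i=\bigcup_a B_a$ by Equation~\eqref{eq:app_identity_sets}.

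\textbf{Closed form for one pass.} I would show by induction on the number of processed records that, after processing any sequence of identities with identity set $B$ from the all-zero state, every register satisfies
\[
M_j=\max\bigl(\{0\}\cup\{r(z): z\in B,\ j(z)=j\}\bigr).
\]
The base case is the empty sequence, where every register is zero and the maximum over $\{0\}$ is zero. For the inductive step, the update in Equation~\eqref{eq:app_hll_update} touches only register $j(z)$ and replaces it by $\max\{M_{j(z)},r(z)\}$, which is exactly the formula with $B$ replaced by $B\cup\{z\}$. Registers $j\neq j(z)$ are unchanged, as is their formula, since $z$ does not contribute to them.

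This step is where the fixed hash and seed are essential. The pair $(j(z),r(z))$ is a deterministic function of $z$ alone, so it is the same in every segment. The closed form therefore depends only on the set $B$, and not on the order or multiplicity of records.

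\textbf{Merge equals one pass.} By the closed form, segment $a$ yields $M^{[a]}_{i,j}=\max(\{0\}\cup\{r(z):z\in B_a,\ j(z)=j\})$. Compatibility means equal $m$, hash and seed, and it guarantees that the index $j$ and the map $r$ are the same across all segments. Taking the maximum over $a$ and using associativity and commutativity of $\max$ over a union of finite sets gives
\[
\max_a M^{[a]}_{i,j}=\max\Bigl(\{0\}\cup\{r(z): z\in \textstyle\bigcup_a B_a,\ j(z)=j\}\Bigr).
\]
By the closed form this is the register produced by a single pass over the full stream. Since this holds for every $j$, the two register arrays are identical. The rank cap at $64-p+1$ has no effect, because it is a property of $r$ itself.

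\textbf{Main obstacle.} The only real care is stating the compatibility hypothesis precisely; with differing seeds or register counts the claim fails. I would also remark that the argument does not depend on the segments being consecutive. It applies to any partition of the records, and the merge may be taken in any order or bracketing, which matches the order-independence observed in the experiments.
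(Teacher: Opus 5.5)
Your proposal is correct and follows the same route as the paper. Both proofs identify each register with the largest rank, or zero, among identities that hash to it, and then note that the maximum over segments equals that quantity over the whole stream, register by register. Your induction spells out the per-register characterization that the paper asserts without proof, and your extension to arbitrary partitions and merge orders is a valid generalization.
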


\begin{proof}
For register $j$, each segment stores the largest rank among identities in
that segment that map to $j$, or zero if no such identity occurs. Taking the
maximum over segments is therefore the largest rank among all identities in
the complete stream that map to $j$. This is precisely the value produced by
a single pass. The argument holds independently for every register.
\end{proof}

The equality is at the state level; the cardinality readout remains an
estimate. States from different streams are not combined during
construction, because relation readouts must still know which stream
contributed each register array.

\subsection{Cardinality and union}

For the register counts evaluated in this paper ($m\geq256$), let
\begin{equation}
\widetilde F_0(s_i)
=
\alpha_m m^2
\left(\sum_{j=1}^{m}2^{-M_{i,j}}\right)^{-1},
\qquad
\alpha_m=\frac{0.7213}{1+1.079/m}.
\label{eq:app_hll_raw_estimator}
\end{equation}
If $\widetilde F_0(s_i)\leq2.5m$ and $V_i>0$, where $V_i$ is the number of
zero registers, the implementation uses the small-range correction
\begin{equation}
\widehat F_0(s_i)
=
m\log\frac{m}{V_i}.
\label{eq:app_hll_linear_counting}
\end{equation}
Otherwise, $\widehat F_0(s_i)=\widetilde F_0(s_i)$. These are the standard HLL
readouts~\citep{hll2007}.

A distinct-count request returns $\widehat F_0(s_i)$. A union request forms
$s_1\oplus s_2$ temporarily and returns
\begin{equation}
\widehat y_q
=
\rho_q(s_1,s_2)
=
\widehat F_0(s_1\oplus s_2).
\label{eq:app_hll_union}
\end{equation}
The temporary merge does not overwrite either state.

\subsection{Jaccard similarity and containment}
\label{app:jmle}

Jaccard similarity and containment read
the two compatible HLL arrays jointly. Let $A=A_1$ and $B=A_2$, and define
\[
n_{10}=|A\setminus B|,
\qquad
n_{01}=|B\setminus A|,
\qquad
n_{11}=|A\cap B|.
\]
The joint maximum-likelihood estimator (JMLE) estimates these three regions
directly from the paired registers~\citep{ertl2017new}.

The following likelihood records the implementation used in our experiments.
Let $Q=64-\log_2m$ be the number of rank bits and define
\begin{equation}
w(k)=
\begin{cases}
2^{-k}, & 0\leq k\leq Q,\\
0, & k\geq Q+1.
\end{cases}
\label{eq:app_jmle_weight}
\end{equation}
The second branch handles the saturated register value $Q+1$ exactly. Under
the standard Poissonized HLL model, the joint cumulative probability of a
paired register is
\begin{equation}
F(u,v;\boldsymbol n)
=
\exp\!\left[
-\frac{
n_{10}w(u)+n_{01}w(v)+n_{11}w(\min\{u,v\})
}{m}
\right],
\label{eq:app_jmle_cdf}
\end{equation}
where $\boldsymbol n=(n_{10},n_{01},n_{11})$ and $F(u,v;\boldsymbol n)=0$ if
$u<0$ or $v<0$. The probability of observing the register pair $(u,v)$ is
the two-dimensional finite difference
\begin{align}
p(u,v;\boldsymbol n)
={}&F(u,v;\boldsymbol n)-F(u-1,v;\boldsymbol n) \nonumber\\
&-F(u,v-1;\boldsymbol n)+F(u-1,v-1;\boldsymbol n).
\label{eq:app_jmle_pmf}
\end{align}
If $H_{uv}$ counts how many of the $m$ register pairs equal $(u,v)$, JMLE
maximizes
\begin{equation}
\ell(\boldsymbol n)
=
\sum_{u,v}H_{uv}\log p(u,v;\boldsymbol n)
\quad\text{subject to}\quad
n_{10},n_{01},n_{11}\geq0.
\label{eq:app_jmle_likelihood}
\end{equation}

The implementation performs deterministic coordinate ascent in log space and
tests zero explicitly for every coordinate. A candidate step is accepted only
when it does not decrease Equation~\eqref{eq:app_jmle_likelihood}. Convergence
requires both parameter stability and the boundary-aware first-order
condition: interior coordinates must have negligible gradient, while a
coordinate at zero must have no improving feasible direction. The optimizer
returns its termination reason, iteration count, boundary coordinates, and
optimality residual. If these checks fail, the relation readout is marked
invalid; it is not replaced by inclusion--exclusion.

With the resulting estimates
$(\widehat n_{10},\widehat n_{01},\widehat n_{11})$, the readouts are
\begin{equation}
\widehat J
=
\frac{\widehat n_{11}}
{\widehat n_{10}+\widehat n_{01}+\widehat n_{11}},
\qquad
\widehat C_{A\to B}
=
\frac{\widehat n_{11}}
{\widehat n_{10}+\widehat n_{11}}.
\label{eq:app_jmle_readouts}
\end{equation}
The zero-denominator cases follow the conventions in
Section~\ref{app:operator_details}. Both ratios use the same two
states; no additional persistent set state is allocated.

\section{Frequency-State Extension}
\label{app:frequency_details}

HLL deliberately ignores repeated identities. If request $q$ asks for the
frequency of one identity $z^\star$ known before processing, an exact scalar
counter is enough:
\begin{equation}
c\leftarrow c+\mathbf 1[z=z^\star].
\label{eq:app_scalar_counter}
\end{equation}
Scalar states built from separate segments merge by addition.

When the queried identity is selected after processing, or when several
identities may be queried, Count--Min Sketch maintains
$C\in\mathbb N_0^{d_{\mathrm{cm}}\times w_{\mathrm{cm}}}$~\citep{countmin2004}.
Let $\eta_\ell$ be the hash function for row $\ell$. Each occurrence updates
one counter per row, and a point query returns the smallest corresponding
counter:
\begin{equation}
\begin{aligned}
C_{\ell,\eta_\ell(z)}
&\leftarrow C_{\ell,\eta_\ell(z)}+1,
&&\ell=1,\ldots,d_{\mathrm{cm}},\\
\widehat\mu_i(z^\star)
&=
\min_{1\leq\ell\leq d_{\mathrm{cm}}}
C_{\ell,\eta_\ell(z^\star)}.
\end{aligned}
\label{eq:app_countmin}
\end{equation}
Compatible tables use the same dimensions and row hashes and merge by
elementwise addition. If $N_i$ is the number of updates to stream $i$,
the standard hashing assumptions give
\begin{equation}
\mu_i(z)
\leq
\widehat\mu_i(z)
\leq
\mu_i(z)+\frac{eN_i}{w_{\mathrm{cm}}}
\label{eq:app_countmin_bound}
\end{equation}
with probability at least $1-e^{-d_{\mathrm{cm}}}$. The table size is fixed,
but the collision term grows with stream mass. Count--Min is included as a
frequency extension; the primary set-based experiments use HLL.

\section{Experimental Protocol}
\label{app:experimental_protocol}

\subsection{Cohorts and comparison arms}

The aggregate--then--reason evaluation fixes the structured records and
precompiled operator specification before model inference. This isolates the
question studied here: whether a fixed-budget aggregation state supplies the
evidence needed for a later decision. It does not evaluate extraction from raw
prose.

The large-scale cohort contains 3,969 unique tasks from 174 source windows
and is evaluated in full by both Qwen and Gemma. A fixed 1,200-task subset
spanning the same 174 windows is used for the same-cohort code-execution
comparison. The full-context comparison selects one task per window without
looking at its answer, giving 174 matched tasks per model. All full-context
methods use the same served 150,000-token limit.

Four state-handoff arms separate approximation error from missing decision
information. The \emph{exact} arm supplies the exact aggregate and the
decision condition. The \emph{budgeted} arm replaces the exact aggregate with
the fixed-budget readout. The \emph{aggregate-only} and \emph{policy-only}
controls withhold the decision condition and the aggregate, respectively.
The full-context comparison uses direct prompting, chain-of-thought prompting,
and guided-choice prompting on the matched raw context.

\subsection{Data provenance and deterministic task construction}

The experiments introduce no new raw corpus. They use the public
Oolong-Synth dataset~\citep{bertsch2025oolong}, available from
\url{https://huggingface.co/datasets/oolongbench/oolong-synth}. The large-scale
campaign loads the dataset at revision
\texttt{f0d59eaf0febf130664cfceb710436c8e3216b2b}. The released structured
records are consumed directly: there is no imputation, learned preprocessing,
token filtering, or data augmentation. Canonical identities are converted to
UTF-8 strings before fixed-seed hashing.

Generator settings were fixed using 30 validation windows and then applied
once to the test split. The final values were a maximum of 30
candidate items per source window, a 16-item minimum, a 0.60 non-additive
fraction, a target of 26 tasks per window, and 16 generation attempts per
operator. The generator seed was 20,260,720. Test answers were not used to
select these settings or the 1,200-task replication subset.

\subsection{Final settings and development ranges}

Table~\ref{tab:app_final_settings} lists the final settings used by the
paper-facing runs. The main HLL size was fixed before test evaluation. The
controlled budget study additionally evaluated
$m\in\{256,512,1024,2048,4096,8192\}$ without changing the main-run setting.
UMBC architecture selection used validation streams only; test streams were
opened after the winning architecture was fixed.

\begin{table}[ht]
\centering
\small
\setlength{\tabcolsep}{5pt}
\caption{Final experimental settings. ``TP'' denotes tensor parallelism.}
\label{tab:app_final_settings}
\begin{tabular}{L{0.27\textwidth}L{0.66\textwidth}}
\toprule
Component & Final setting \\
\midrule
HLL &
$m=2048$ one-byte registers; 64-bit keyed hash; seed 0; register-max merge \\
HLL JMLE &
24 coordinate sweeps; parameter tolerance $10^{-3}$; KKT tolerance
$5\times10^{-4}$; no inclusion--exclusion fallback \\
Count--Min &
4 rows $\times$ 64 columns of int64 cells (2,048 bytes per table);
label seed 0 and user seed 1 \\
Qwen reasoner &
Qwen3.6-35B-A3B revision
\texttt{995ad96eacd98c81ed38be0c5b274b04031597b0}; BF16; temperature 0;
test seed 20,260,718; 96 output tokens for state handoff; 8,192 direct and
16,384 CoT output-token caps; 150,000 served tokens; TP2 \\
Gemma reasoner &
Gemma-4-31B-IT revision
\texttt{3548789868c5356dbf307c98e6f609007b82b3eb}; BF16; temperature 0;
test seed 20,260,721; 96 output tokens for state handoff and a 150,000-token
served limit; direct MLX serving with \texttt{mlx-lm} 0.31.3; no quantization \\
Code execution &
The Qwen and Gemma revisions above; temperature 0; at most 4,096 generated
tokens per round, three rounds, and 60 seconds of isolated Python execution
per round; Apptainer isolation for Qwen and macOS Seatbelt isolation for
Gemma \\
Official UMBC &
2,048-byte carried state; deterministic 16-slot state with
$\widehat d=28$, four heads, and slot-sigmoid attention; AdamW,
learning rate $3\times10^{-3}$, 4,000 final steps; training seeds
$\{0,1,2\}$ \\
Uncertainty &
10,000 source-window-cluster bootstrap resamples; analysis seeds 20,260,720
and 20,260,721; paired randomization seed 20,260,728 \\
\bottomrule
\end{tabular}
\end{table}

\subsection{Run counts}

Every reported model--task--arm cell is one deterministic inference run at
temperature zero; rows from repeated arms are not counted as additional
independent tasks. The Qwen and Gemma scale experiments each evaluate the same
3,969 tasks from 174 windows once in each of six prespecified conditions
(23,814 terminal rows per model). The balanced full-context experiment uses
174 tasks per prompt condition and model. The code-execution reference uses
the fixed 1,200-task subset once per model (2,400 terminal task runs in total).
Failed calls or parses remain in their corresponding denominator with score
zero.

\subsection{Counterfactual mechanism check}

The counterfactual arm changes the supplied aggregate and recomputes the answer
that follows from that change. It therefore checks whether the reasoner uses
the aggregate evidence rather than reproducing the original answer. Across
\ATRClaimN{scale.counterfactual.accuracy} tasks from
\ATRClaimWindows{scale.counterfactual.accuracy} source windows, counterfactual
accuracy is \ATRPctCI{scale.counterfactual.accuracy}. This is a mechanism
diagnostic, not accuracy on the original task.

\subsection{Common-packing-eligible sensitivity}

The primary full-context comparison retains all 174 tasks, including failures.
As a secondary check, we intersect the tasks that fit under direct,
chain-of-thought, and guided prompting for both models. This answer-blind rule
leaves 94 tasks. Parsing failures would remain in the denominator, although
none occur in the three values shown in Table~\ref{tab:app_common_eligible}.

\begin{table}[ht]
\centering
\small
\setlength{\tabcolsep}{5pt}
\caption{Common-packing-eligible sensitivity analysis. Values are task
accuracy or paired percentage-point differences with 95\% source-window-cluster
bootstrap intervals. This selected analysis does not replace the all-window
primary comparison.}
\label{tab:app_common_eligible}
\begin{tabular}{lcccc}
\toprule
Model & $n$ & Budgeted state & Guided context & Difference \\
\midrule
Qwen3.6-35B-A3B &
\ATRClaimN{cross.qwen.commoneligible_budgeted} &
\ATRPctCI{cross.qwen.commoneligible_budgeted} &
\ATRPctCI{cross.qwen.commoneligible_guided} &
\ATRPctCI{cross.qwen.commoneligible_budgeted_minus_guided} \\
Gemma-4-31B-IT &
\ATRClaimN{cross.gemma.commoneligible_budgeted} &
\ATRPctCI{cross.gemma.commoneligible_budgeted} &
\ATRPctCI{cross.gemma.commoneligible_guided} &
\ATRPctCI{cross.gemma.commoneligible_budgeted_minus_guided} \\
\bottomrule
\end{tabular}
\end{table}

\section{Controlled State and Resource Results}
\label{app:controlled_results}

\subsection{Full-scale state-handoff results}

Table~\ref{tab:app_full_scale} reports the complete 3,969-task state-handoff
evaluation for both models. Every condition uses the same task set and source
windows.

\begin{table}[ht]
\centering
\small
\setlength{\tabcolsep}{7pt}
\caption{Accuracy on all 3,969 tasks from 174 source windows. Brackets give
95\% source-window-cluster bootstrap intervals. All failures remain in the
denominator with score zero.}
\label{tab:app_full_scale}
\begin{tabular}{lcc}
\toprule
Condition & Qwen & Gemma \\
\midrule
Fixed-budget state &
\ATRPctCI{scale.budgeted.accuracy} &
\ATRPct{\GemmaScaleBudgetedValue}
[\ATRPct{\GemmaScaleBudgetedLow}, \ATRPct{\GemmaScaleBudgetedHigh}] \\
Exact aggregate &
\ATRPctCI{scale.exact.accuracy} &
\ATRPct{\GemmaScaleExactValue}
[\ATRPct{\GemmaScaleExactValue}, \ATRPct{\GemmaScaleExactValue}] \\
Counterfactual aggregate &
\ATRPctCI{scale.counterfactual.accuracy} &
\ATRPct{\GemmaScaleCounterfactualValue}
[\ATRPct{\GemmaScaleCounterfactualValue},
\ATRPct{\GemmaScaleCounterfactualValue}] \\
Aggregate-only &
\ATRPctCI{scale.aggregate_only.accuracy} &
\ATRPct{\GemmaScaleAggregateOnlyValue}
[\ATRPct{\GemmaScaleAggregateOnlyLow}, \ATRPct{\GemmaScaleAggregateOnlyHigh}] \\
Policy-only &
\ATRPctCI{scale.policy_only.accuracy} &
\ATRPct{\GemmaScalePolicyOnlyValue}
[\ATRPct{\GemmaScalePolicyOnlyLow}, \ATRPct{\GemmaScalePolicyOnlyHigh}] \\
\bottomrule
\end{tabular}
\end{table}

Gemma reaches 99.17\% with the fixed-budget state and 100.00\% with the exact
aggregate. The paired fixed-budget-minus-exact difference is $-0.83$
percentage points (95\% cluster interval, $-1.28$ to $-0.48$). The Gemma
policy-only arm has 1,772 parsing failures; they remain in the denominator.
The other five Gemma arms have no parsing, transport, or model failures.

\subsection{Official UMBC comparison at equal carried-state size}

We compare HLL with the official Universal Mini-Batch Consistency (UMBC)
implementation~\citep{umbc2023}. Both methods carry 2,048 bytes from one
partition to the next and receive the same identity stream. Equal carried
state does not mean equal total memory: UMBC also has learned parameters, and
both methods use the same frozen feature trunk. Table~\ref{tab:app_umbc}
therefore reports these resource cells separately.

\begin{table}[ht]
\centering
\small
\setlength{\tabcolsep}{6pt}
\caption{Median relative error at equal 2,048-byte carried state, using 100
streams per length. Brackets give 95\% bootstrap intervals. The final column
is the paired UMBC-minus-HLL difference.}
\label{tab:app_umbc}
\begin{tabular}{lccc}
\toprule
Length & HLL & Official UMBC & Paired difference \\
\midrule
8K &
\ATRPctCI{umbc.equalbyte.8k.hll} &
\ATRPctCI{umbc.equalbyte.8k.official_umbc} &
\ATRPctCI{umbc.equalbyte.8k.paired} \\
50K &
\ATRPctCI{umbc.equalbyte.50k.hll} &
\ATRPctCI{umbc.equalbyte.50k.official_umbc} &
\ATRPctCI{umbc.equalbyte.50k.paired} \\
200K &
\ATRPctCI{umbc.equalbyte.200k.hll} &
\ATRPctCI{umbc.equalbyte.200k.official_umbc} &
\ATRPctCI{umbc.equalbyte.200k.paired} \\
\bottomrule
\end{tabular}
\end{table}

\begin{table}[ht]
\centering
\small
\caption{Resource accounting for the equal-carried-state comparison. Static
parameters and the shared trunk are not folded into the carried-state cells.}
\label{tab:app_umbc_resources}
\begin{tabular}{L{0.48\textwidth}r}
\toprule
Resource cell & Bytes \\
\midrule
HLL resident register state &
\ATRClaimValue{umbc.resource.hll_resident_state_bytes} \\
UMBC carried state &
\ATRClaimValue{umbc.resource.umbc_carried_state_bytes} \\
UMBC static parameters &
\ATRClaimValue{umbc.resource.umbc_static_parameter_bytes} \\
Shared frozen feature trunk &
\ATRClaimValue{umbc.resource.shared_frozen_feature_trunk_bytes} \\
\bottomrule
\end{tabular}
\end{table}

HLL has lower median error at all three lengths under the equal carried-state
budget. The comparison supports a claim about the state passed across
partitions, not a claim that the two end-to-end systems have identical memory.

\subsection{Register budget and distinct-count error}

Table~\ref{tab:app_hll_budget} gives the complete controlled budget sweep. The
state uses one byte per register, so the storage column is also the serialized
register payload. Increasing $m$ lowers error without changing the update
rule or introducing storage that grows with the stream.

\begin{table}[ht]
\centering
\small
\setlength{\tabcolsep}{8pt}
\caption{Distinct-count RMSE across HLL register budgets. Brackets give 95\%
bootstrap intervals; $n$ is the number of controlled streams at that budget.}
\label{tab:app_hll_budget}
\begin{tabular}{rrrr}
\toprule
Registers $m$ & State bytes & $n$ & Relative-error RMSE \\
\midrule
256  & 256  & \ATRClaimN{package_d.distinct_rmse.m256}
& \ATRPctCI{package_d.distinct_rmse.m256} \\
512  & 512  & \ATRClaimN{package_d.distinct_rmse.m512}
& \ATRPctCI{package_d.distinct_rmse.m512} \\
1,024 & 1,024 & \ATRClaimN{package_d.distinct_rmse.m1024}
& \ATRPctCI{package_d.distinct_rmse.m1024} \\
2,048 & 2,048 & \ATRClaimN{package_d.distinct_rmse.m2048}
& \ATRPctCI{package_d.distinct_rmse.m2048} \\
4,096 & 4,096 & \ATRClaimN{package_d.distinct_rmse.m4096}
& \ATRPctCI{package_d.distinct_rmse.m4096} \\
8,192 & 8,192 & \ATRClaimN{package_d.distinct_rmse.m8192}
& \ATRPctCI{package_d.distinct_rmse.m8192} \\
\bottomrule
\end{tabular}
\end{table}
\FloatBarrier

The 1.6\% value reported in the abstract uses a different, explicitly fixed
summary. At $m=2048$ and one million records per stream, the fixed-data
hash-replicate axis evaluates two streams under ten hash seeds
(20 estimates). Its mean absolute relative error,
$\operatorname{mean}(|\widehat y-y|/y)$, is
\fpeval{round(\MillionRecordMeanRelativeErrorPct,3)}\%, reported as 1.6\%.
In contrast,
Table~\ref{tab:app_hll_budget} reports RMSE over the combined data- and
hash-replicate axes. The two values differ because both the replicate pool and
the error functional differ.

Fixed size does not imply that a sketch is always smaller than an exact
representation. For one $m=2048$ HLL state, the measured crossover against the
serialized UTF-8 identity payload is
\ATRRoundTwo{\ATRClaimValue{package_d.m2048_utf8_crossover}} distinct identities
(95\% CI,
\ATRRoundTwo{\ATRClaimCILow{package_d.m2048_utf8_crossover}}--%
\ATRRoundTwo{\ATRClaimCIHigh{package_d.m2048_utf8_crossover}}). Exact payload is smaller
below this point; HLL is smaller above it. The measurement excludes container,
allocator, and whole-system overhead and must not be read as a Python-set
memory comparison.

\subsection{Same-cohort code-execution reference}

The code-execution arm uses the same 1,200 tasks, structured records, and
operator specifications as the state arms for each model. It generated a
Python program for every task and executed that program in a sandbox.
Table~\ref{tab:app_tool} reports failure-as-zero accuracy and average
generation resources on these common cohorts.

\begin{table}[ht]
\centering
\small
\setlength{\tabcolsep}{6pt}
\caption{Same-cohort comparison on 1,200 tasks and 174 windows per model.
Calls and output tokens are averages per task.}
\label{tab:app_tool}
\begin{tabular}{llrrr}
\toprule
Model & Method & Accuracy & Calls & Output tokens \\
\midrule
Qwen & Fixed-budget state & 91.08\% & 1.00 & 2.2 \\
Qwen & Code execution & 100.00\% & \QwenCodeCallsPerTask &
\fpeval{round(\QwenCodeOutputTokensPerTask,1)} \\
Gemma & Fixed-budget state & 99.25\% & 1.00 & 2.2 \\
Gemma & Code execution & 100.00\% & \GemmaCodeCallsPerTask &
\fpeval{round(\GemmaCodeOutputTokensPerTask,1)} \\
\bottomrule
\end{tabular}
\end{table}

All 1,200 runs completed correctly for each model. Qwen used 1.01 rounds and
1,260.2 output tokens per task on average; twelve initially unparseable
attempts were repaired by the fixed retry rule. Gemma used one round and
208.6 output tokens per task, with no generation, execution, parsing, or
transport failures. Code execution exceeds the fixed-budget state by 8.92
points on Qwen (95\% cluster interval, 7.14--10.71) and 0.75 points on Gemma
(0.25--1.33). It is an exact-execution reference with access to the full
records and a trusted sandbox, not an equal-state or equal-compute baseline.

\FloatBarrier

\section{Statistical Analysis and Evaluation Scope}
\label{app:statistical_analysis}

\subsection{Evaluation and statistical analysis}

Task accuracy is the fraction of final choices matching the independently
derived gold answer. Context-packing, model-call, and parsing failures stay in
the prespecified denominator and receive score zero. Because tasks from the
same source window are related, all reported confidence intervals for task
accuracy and paired differences use 10,000 source-window-cluster bootstrap
resamples. The estimand is task-weighted accuracy; the 174 windows, not the
number of arm rows, are the independent clusters.

For numerical aggregation, relative error is
\begin{equation}
\operatorname{relerr}(\widehat y,y)
=
\frac{|\widehat y-y|}{\max(1,|y|)}.
\label{eq:app_relerr}
\end{equation}
The budget study reports the root mean square of this quantity. The learned
state comparison reports its median over 100 streams at each length. Those two
summaries answer different questions and are not compared across tables.

For the main paired contrasts, we additionally use a two-sided source-window
sign-flip randomization test. The statistic is the task-weighted mean paired
score difference, but the sign is randomized once per source window,
preserving all within-window dependence. We use 100,000 Monte Carlo draws, a
plus-one correction, and Holm correction over the eight contrasts in
Table~\ref{tab:app_significance}. The analysis rejects non-terminal rows,
duplicate item--arm keys, mismatched item sets, and mismatched source windows;
all input SHA-256 values are recorded with the result.

\begin{table}[ht]
\centering
\small
\setlength{\tabcolsep}{5pt}
\caption{Paired source-window randomization tests. Differences are left minus
right in percentage points. $p_{\mathrm{Holm}}$ corrects the eight displayed
tests.}
\label{tab:app_significance}
\begin{tabular}{L{0.58\textwidth}rr}
\toprule
Contrast & Difference & $p_{\mathrm{Holm}}$ \\
\midrule
Qwen budgeted $-$ exact (3,969 tasks) & $-4.03$ & $8.0\times10^{-5}$ \\
Gemma budgeted $-$ exact (3,969 tasks) & $-0.83$ & $8.0\times10^{-5}$ \\
Qwen budgeted $-$ direct context (174 tasks) & $+63.22$ & $8.0\times10^{-5}$ \\
Qwen budgeted $-$ CoT context (174 tasks) & $+60.92$ & $8.0\times10^{-5}$ \\
Gemma budgeted $-$ direct context (174 tasks) & $+56.32$ & $8.0\times10^{-5}$ \\
Gemma budgeted $-$ CoT context (174 tasks) & $+63.22$ & $8.0\times10^{-5}$ \\
Qwen code execution $-$ budgeted (1,200 tasks) & $+8.92$ & $8.0\times10^{-5}$ \\
Gemma code execution $-$ budgeted (1,200 tasks) & $+0.75$ & $0.00792$ \\
\bottomrule
\end{tabular}
\end{table}

\subsection{Scope of the evaluation}

The following boundaries apply to every result:
\begin{itemize}
    \item The evaluated aggregate-to-decision path starts from released,
    structured records with stable canonical identities and a precompiled
    operator specification. It does not establish the accuracy of extracting
    those identities from unrestricted prose.
    \item The fixed budget is per operand state or active group. Total
    aggregation memory grows with the number of simultaneously maintained
    operands and groups, even though each state does not grow with stream
    length or set cardinality.
    \item The state is updated alongside the frozen model's forward
    computation. Its readout is appended as explicit evidence, and decoding
    continues from the retained KV cache; no fused Transformer kernel or
    hidden-state modification is claimed.
    \item JMLE relation results are consumed only when the optimizer returns a
    finite feasible estimate satisfying its convergence checks. Failed
    readouts remain failures; inclusion--exclusion is not used silently.
    \item The learned-state comparison matches carried-state bytes. The
    crossover study compares one HLL state with serialized UTF-8 identity
    payload. Neither result is a total-system memory comparison.
\end{itemize}

\end{document}